\newtheorem{lemma}{Lemma}
\newtheorem{theorem}{Theorem}
\newtheorem{proposition}{Proposition}
\newcommand{\aaaa}{\mathrm{(a)}}
\newcommand{\bbbb}{\mathrm{(b)}}
\newcommand{\cccc}{\mathrm{(c)}}
\newcommand{\dddd}{\mathrm{(d)}}
\newcommand{\eeee}{\mathrm{(e)}}
\newcommand{\ffff}{\mathrm{(f)}}
\newcommand{\gggg}{\mathrm{(g)}}
\newcommand{\hhhh}{\mathrm{(h)}}
\newcommand{\Prob}[1]{\mathbb{P} \left( #1 \right)}
\newcommand{\lp}{\left(}
\newcommand{\rp}{\right)}
\newcommand{\lb}{\left[}
\newcommand{\rb}{\right]}
\newcommand{\lbp}{\left\{}
\newcommand{\rbp}{\right\}}
\newcommand{\wtild}{\widetilde}
\newcommand{\trace}[1]{\textbf{tr}\lp #1\rp}
\DeclareMathOperator*{\argmin}{arg\,min}
\newcommand{\bmat}{\left[\begin{matrix}}
\newcommand{\emat}{\end{matrix}\right]}
\newcommand{\R}{\mathbb R}
\title{Straggler Mitigation in Distributed Optimization Through Data Encoding}
\author{
  Can Karakus \\
  UCLA\\
  Los Angeles, CA \\
  \texttt{karakus@ucla.edu}
  \And
  Yifan Sun \\
  Technicolor Research \\
  Los Altos, CA \\
  \texttt{Yifan.Sun@technicolor.com}
  \AND
  Suhas Diggavi \\
  UCLA \\
  Los Angeles, CA \\
  \texttt{suhasdiggavi@ucla.edu}
  \And
  Wotao Yin \\
  UCLA \\
  Los Angeles, CA \\
  \texttt{wotaoyin@math.ucla.edu} 
}
\begin{document}


\maketitle

\begin{abstract}

Slow running or straggler tasks can significantly reduce computation speed in distributed computation. Recently, coding-theory-inspired approaches have been applied to mitigate the effect of straggling, through embedding redundancy in certain linear computational steps of the optimization algorithm, thus completing the computation without waiting for the stragglers. In this paper, we propose an alternate approach where we embed the redundancy directly in the data itself, and allow the computation to proceed completely oblivious to encoding. We propose several encoding schemes, and demonstrate that popular batch algorithms, such as gradient descent and L-BFGS, applied in a coding-oblivious manner, deterministically achieve sample path linear convergence to an approximate solution of the original problem, using an arbitrarily varying subset of the nodes at each iteration. Moreover, this approximation can be controlled by the amount of redundancy and the number of nodes used in each iteration. We provide experimental results demonstrating the advantage of the approach over uncoded and data replication strategies.
\end{abstract}

\section{Introduction}\label{sec:intro}
Solving large-scale optimization problems 
has become feasible through distributed implementations. However, the efficiency can be significantly hampered by slow processing nodes, network delays or node failures.
In this paper we develop an optimization framework based on encoding the dataset, which mitigates the effect of straggler nodes in the distributed computing system. 
Our approach can be readily adapted to the existing distributed computing infrastructure and software frameworks, since the node computations are oblivious to the data encoding. 

In this paper, we focus on problems of the form
\begin{align}\label{eq:prob1}
\min_{w \in \mathbb{R}^p} f(w) := \frac{1}{2n}\min_{w \in \mathbb{R}^p}\|  Xw - y\|^2,
\end{align}
where $X \in \mathbb{R}^{n \times p}$, $y \in \mathbb{R}^{n \times 1}$ represent the data matrix and vector respectively. The function $f(w)$ is mapped onto a distributed computing setup
depicted in Figure~\ref{fig:dist}, consisting of one central server
and $m$ worker nodes, which collectively store the row-partitioned matrix $X$ and vector $y$. 
We focus on batch, synchronous optimization methods, where the delayed or failed nodes can significantly slow down the overall computation. Note that asynchronous methods are inherently robust to delays caused by stragglers, although their convergence rates can be worse than their synchronous counterparts.
Our approach consists of adding redundancy by encoding the data $X$ and $y$ into $\wtild X = SX$ and $\wtild y = Sy$, respectively, where
$S\in\mathbb{R}^{(\beta n)\times n}$ is an encoding matrix with redundancy factor\ $\beta \ge 1$, and solving the effective problem
\begin{align}\label{eq:prob2}
    \min_{w \in \mathbb{R}^p} \wtild f(w) := \min_{w \in \mathbb{R}^p} \frac{1}{2\beta n}\| S\lp Xw - y\rp\|^2 = \min_{w \in \mathbb{R}^p}  \frac{1}{2\beta n}\| \wtild Xw - \wtild y \|^2
\end{align}
instead.
In doing so, we proceed with the computation in each iteration without waiting for the stragglers, with the idea that the inserted redundancy will compensate for the lost data. The goal is to design the matrix $S$ such that, when the nodes \emph{obliviously} solve the problem \eqref{eq:prob2} without waiting for the slowest $(m-k)$ nodes (where $k$ is a design parameter) the achieved solution approximates the original solution $w^* = \argmin_w f(w)$ sufficiently closely. Since in large-scale machine learning and data analysis tasks one is typically not interested in the exact optimum, but rather a ``sufficiently" good solution that achieves a good generalization error, such an approximation could be acceptable in many scenarios. Note also that the use of such a technique does not preclude the use of other, non-coding straggler-mitigation strategies (\emph{e.g.}, \cite{MTLstragglerJMLR16, WangJoshi_15, AnanthanarayananGhodsi_13} and references therein), which can still be implemented on top of the redundancy embedded in the system, to potentially further improve performance.


Focusing on gradient descent and L-BFGS algorithms, we show that under a spectral condition on $S$, one can achieve an approximation of the solution of \eqref{eq:prob1}, by solving \eqref{eq:prob2}, without waiting for the stragglers. We show that with sufficient redundancy embedded, and with updates from a sufficiently large, yet strict subset of the nodes in each iteration, it is possible to \emph{deterministically} achieve linear convergence to a neighborhood of the solution, as opposed to convergence in expectation (see Fig. \ref{fig:ec2}). Further, one can adjust the approximation guarantee by increasing the redundancy and number of node updates waited for in each iteration. 
Another potential advantage of this strategy is privacy, since the nodes do not have access to raw data itself, but can still perform the optimization task over the jumbled data to achieve an approximate solution.

Although in this paper we focus on quadratic objectives and two specific algorithms, in principle our approach can be generalized to more general, potentially non-smooth objectives and constrained optimization problems, as we discuss in Section~\ref{sec:code} ( adding a regularization term is also a simple generalization).

Our main contributions are as follows. (i) We demonstrate that gradient descent (with constant step size) and L-BFGS (with line search) applied in a coding-oblivious manner on the encoded problem, achieves (universal) sample path linear
convergence to an approximate solution of the
original problem, using only a fraction of the nodes at each iteration.
(ii) We present three classes of coding matrices; namely, equiangular tight frames (ETF), fast transforms, and random matrices, and discuss their properties. 
(iii) We provide experimental results demonstrating the advantage of the approach over uncoded ($S=I$) and data replication strategies, for ridge regression using synthetic data on an AWS cluster, as well as matrix factorization for the Movielens 1-M recommendation task. 

\paragraph{Related work.} Use of data replication to aid with the straggler problem has
been proposed and studied in \cite{WangJoshi_15, AnanthanarayananGhodsi_13}, and references therein. Additionally, use of
coding in distributed computing has been explored in
\cite{LeeLam_16, DuttaCadambe_16}. However, these works exclusively focused on
using coding at the computation level, \emph{i.e.}, certain linear computational steps are performed in a coded manner, and explicit encoding/decoding operations are performed at each step.
Specifically, \cite{LeeLam_16} used MDS-coded distributed matrix
multiplication and \cite{DuttaCadambe_16} focused on breaking up large dot
products into shorter dot products, and perform redundant copies of
the short dot products to provide resilience against stragglers.
\cite{TandonLei_16} considers a gradient descent method on an architecture
where each data sample is replicated across nodes, and designs a
code such that the exact gradient can be recovered as long as fewer
than a certain number of nodes fail. However, in order to recover the exact gradient under any potential set of stragglers, the required
redundancy factor is on the order of the number of straggling nodes, which could mean a large amount of overhead for a large-scale system. In
contrast, we show that one can converge to an approximate solution with
a redundancy factor independent of network size or problem dimensions (\emph{e.g.,} $2$ as in Section \ref{sec:numerical}). 

Our technique is also closely related to randomized linear algebra and sketching techniques \cite{Mahoney_11, DrineasMahoney_11, PilanciWainwright_15}, used for dimensionality reduction of large convex optimization problems. The main difference between this literature and the proposed coding technique is that the former focuses on reducing the problem dimensions to lighten the computational load, whereas coding \emph{increases} the dimensionality of the problem to provide robustness. As a result of the increased dimensions, coding can provide a much closer approximation to the original solution compared to sketching techniques. 
  

\section{Encoded Optimization Framework}\label{sec:encoded}
Figure \ref{fig:dist}  shows a typical data-distributed computational model in large-scale optimization (left), as well as our proposed encoded model (right).
Our computing network consists of $m$ machines, where machine $i$ stores $\lp \wtild X_i, \wtild y_i\rp=\lp S_iX, S_iy\rp$ and $S =\lb S_1^\top \;\; S_2^\top\;\; \dots\;\;S_m^\top\rb^\top$.
The optimization process is oblivious to the encoding, \emph{i.e.}, once the data is stored at the nodes, the optimization algorithm proceeds exactly as if the nodes contained uncoded, raw data $(X,y)$.
\begin{figure}
\centering
\begin{minipage}{.46\textwidth}
  \centering
  \includegraphics[scale=0.5]{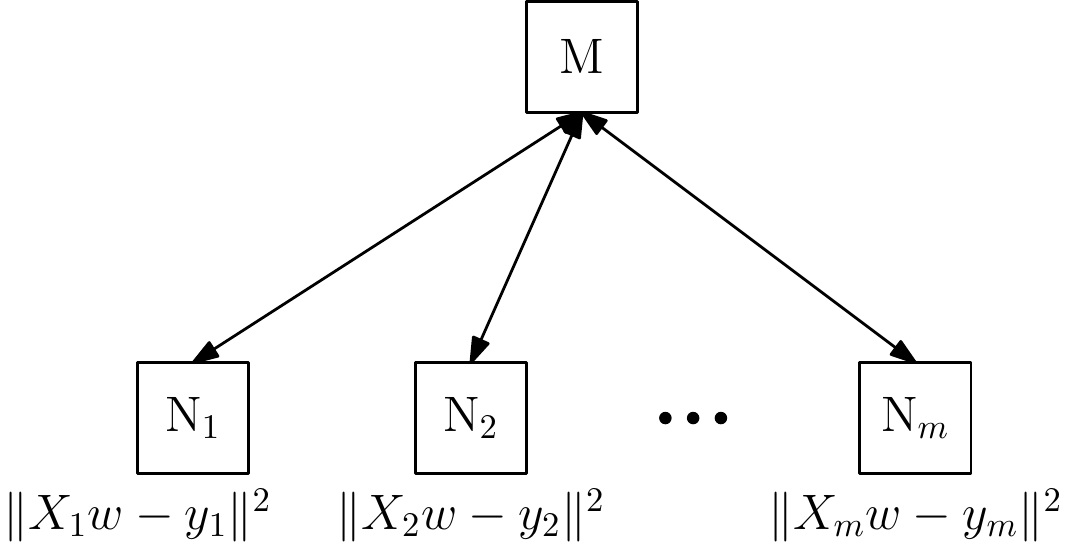}
\end{minipage}\hfill
\begin{minipage}{.46\textwidth}
  \centering
  \includegraphics[scale=0.5]{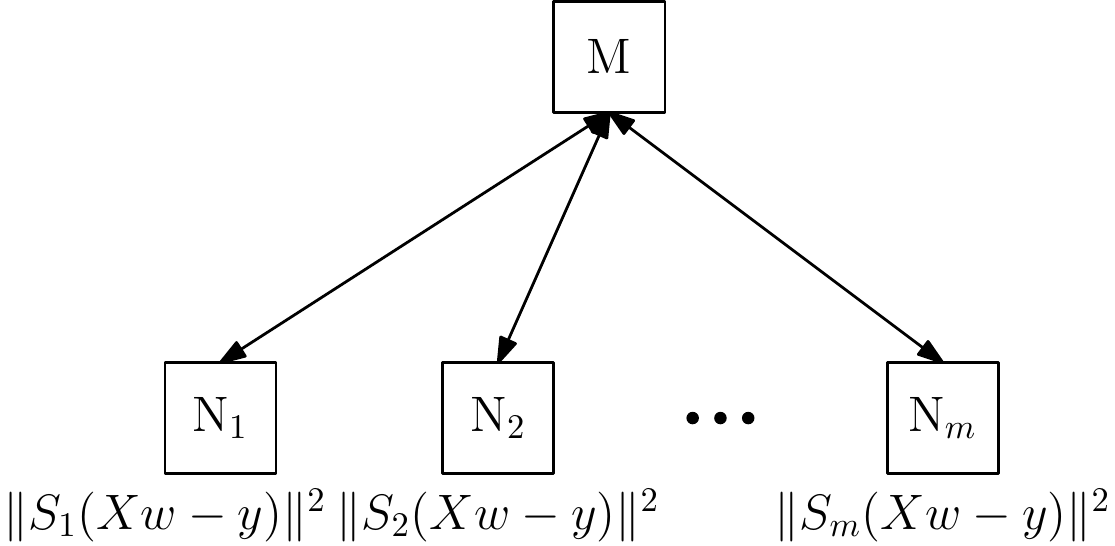}
\end{minipage}
\caption{{\bf Left:} Uncoded distributed optimization with partitioning, where $X$ and $y$ are partitioned as $X = \lb X^\top_1 \; X^\top_2 \; \dots \; X^\top_m\rb^\top$ and $y = \lb y^\top_1 \; y^\top_2 \; \dots \; y^\top_m\rb^\top$.
{\bf Right:} Encoded distributed optimization, where node $i$ stores $\lp S_i X, S_i y\rp$, instead of $\lp X_i, y_i\rp$. The uncoded case corresponds to $S=I$.}
\label{fig:dist}
\end{figure}
In each iteration $t$, the central server broadcasts the current estimate $w_t$, and each worker machine computes and sends to the server the gradient terms corresponding to its own partition $g_i (w_t) := \wtild X_i^\top (\wtild X_i w_t- \wtild y_i)$. 

Note that this framework of distributed optimization is typically communication-bound, where communication over a few slow links constitute a significant portion of the overall computation time. We consider a strategy where at each iteration $t$, the server only uses the gradient updates from the first $k$ nodes to respond in that iteration, thereby preventing such slow links and straggler nodes from stalling the overall computation:
\begin{align*}
    \wtild g_t = 
    \frac{1}{2\beta\eta n}\sum_{i \in A_t} g_i (w_t) 
  = \frac{1}{\beta\eta n} \wtild X^\top_A (\wtild  X_A w_t- \wtild y_A),
\end{align*}
where
$A_t \subseteq [m]$, $|A_t|=k$ are the indices of the first $k$ nodes to respond at iteration $t$,
$\eta:=\frac{k}{m}$ and $\wtild X_A = \lb S_i X\rb_{i \in A_t}$.
(Similarly, $S_A = \lb S_i\rb_{i \in A_t}$.)
Given the gradient approximation, the central server then computes a \emph{descent direction $d_t$} through the history of gradients and parameter estimates.
For the remaining nodes $i\not\in A_t$, the server can either send an interrupt signal, or simply drop their updates upon arrival, depending on the implementation. 
%
%


Next, the central server chooses a step size $\alpha_t$, which can be chosen as constant, decaying, or through exact line search
\footnote{Note that exact line search is not more expensive than backtracking line search for a quadratic loss, since it only requires a single matrix-vector multiplication.} by having the workers compute $\wtild X d_t$ that is needed to compute the step size.
We again assume the central server only hears from the fastest $k$ nodes, denoted by $D_t \subseteq [m]$, where $D_t \neq A_t$ in general, to compute
\begin{align}
    \alpha_t = -\nu\frac{d_t^\top \wtild g_t}{d_t^\top \wtild X^\top_D \wtild X_D d_t} \label{eq:exact_ls},
\end{align}
where $\wtild X_D = \lb S_i X\rb_{i \in D_t}$, and $0<\nu<1$ is a back-off factor of choice.

Our goal is to especially focus on the case $k<m$, and design an encoding matrix $S$ such that, for any sequence of sets $\lbp A_t\rbp$, $\lbp D_t\rbp$, $f(w_t)$ universally converges to a neighborhood of $f(w^*)$. Note that in general, this scheme with $k<m$ is not guaranteed to converge for traditionally batch methods like L-BFGS.
Additionally, although the algorithm only works with the encoded function $\wtild f$,
our goal is to provide a convergence guarantee in terms of the \emph{original} function $f$.

\section{Algorithms and Convergence Analysis}\label{sec:convergence}
Let the smallest and largest eigenvalues of $X^\top X$ be denoted by $\mu>0$ and $M>0$, respectively.

Let $\eta$ with $\frac{1}{\beta} < \eta \leq 1$ be given. In order to prove convergence,
we will consider a family of matrices $\lbp S^{(\beta)}\rbp$ where $\beta$ is the aspect ratio (redundancy factor), such that for any $\epsilon>0$, and any $A \subseteq [m]$ with $\left| A \right| = \eta m$,
\begin{align}
     (1-\epsilon) I \preceq S_A^\top S_A \preceq (1+\epsilon) I \label{eq:rip},
\end{align}
for sufficiently large $\beta \geq 1$, where $S_A =\lb S_i\rb_{i \in A}$ is the submatrix associated with subset $A$ (we drop dependence on $\beta$ for brevity). Note that this is similar to the restricted isometry property (RIP) used in compressed sensing \cite{CandesTao_05}, except that \eqref{eq:rip} is only required for submatrices of the form $S_A$. Although this condition is needed to prove worst-case convergence results, in practice the proposed encoding scheme can work well even when it is not exactly satisfied, as long as the bulk of the eigenvalues of $S_A^\top S_A$ lie within a small interval $\lb 1-\epsilon, 1+\epsilon \rb$. 
We will discuss several specific constructions and their relation to property \eqref{eq:rip} in Section~\ref{sec:code}.


\paragraph{Gradient descent.} We consider gradient descent with constant step size, \emph{i.e.},
\begin{align*}
    w_{t+1} = w_t + \alpha d_t = w_t - \alpha \wtild g_t.
\end{align*} 
The following theorem characterizes the convergence of the encoded problem under this algorithm.
\begin{theorem}\label{th:gd}
Let $f_t = f(w_t)$, where $w_t$ is computed using gradient descent with updates from a set of (fastest) workers $A_t$, with constant step size $\alpha_t \equiv \alpha = \frac{2\zeta}{M(1+\epsilon)}$ for some $0 < \zeta \leq 1$, for all $t$. If $S$ satisfies \eqref{eq:rip} with $\epsilon>0$, then for all sequences of $\{ A_t\}$ with cardinality $\left| A_t \right|=k$,
\begin{align*}
f_t \leq \lp \kappa \gamma_1\rp^t f_0 + \frac{\kappa^2(\kappa-\gamma_1)}{1-\kappa\gamma_1}f\lp w^*\rp,\quad t=1,2,\ldots,
\end{align*}
where $\kappa = \frac{1+\epsilon}{1-\epsilon}$, and $\gamma_1 = \lp 1 - \frac{4\mu\zeta(1-\zeta)}{M\lp 1+\epsilon\rp}\rp$, and $f_0=f(w_0)$ is the initial objective value. 
\end{theorem}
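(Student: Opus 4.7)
My plan is to track the residual $r_t := X w_t - y$ directly, since the update has a particularly simple linear structure at that level. With $P_t := S_{A_t}^\top S_{A_t}$, the encoded gradient $\wtild g_t$ is proportional to $X^\top P_t r_t$, so the iterate update becomes
\[
r_{t+1} = \bigl(I - c_\alpha\, X X^\top P_t\bigr)\, r_t,
\]
for an effective scalar $c_\alpha$ absorbing $\alpha$ and the normalization $1/(\beta\eta n)$. The normal equations $X^\top (X w^* - y) = 0$ put $r^* := X w^* - y$ in $\ker X^\top$, so $r^*$ is invariant under the update and one gets the Pythagorean identity $\|r_t\|^2 = \|X e_t\|^2 + \|r^*\|^2$ with $e_t := w_t - w^*$. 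Equivalently, $f_t - f(w^*) = \tfrac{1}{2n}\|X e_t\|^2$, so it suffices to bound $\|X e_t\|$ along iterations.

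Substituting $r_t = X e_t + r^*$ into the residual update gives
\[
X e_{t+1} = \bigl(I - c_\alpha\, X X^\top P_t\bigr) X e_t \;-\; c_\alpha\, X X^\top P_t\, r^*.
\]
The first term is the contractive part: by \eqref{eq:rip} the symmetric form $X^\top P_t X$ has spectrum in $[(1-\epsilon)\mu,\, (1+\epsilon)M]$, so with the prescribed $\alpha = 2\zeta/[M(1+\epsilon)]$ a direct quadratic-form calculation reproduces the rate $\gamma_1 = 1 - 4\mu\zeta(1-\zeta)/[M(1+\epsilon)]$, together with the asymmetry factor $\kappa = (1+\epsilon)/(1-\epsilon)$ arising from the fact that $X X^\top P_t$ is not self-adjoint. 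The second term is a persistent bias: although $X^\top r^* = 0$, the product $X^\top P_t r^* = X^\top(P_t - I)\, r^*$ need not vanish, and the RIP bound $\|P_t - I\|_{\mathrm{op}} \leq \epsilon$ yields $\|X X^\top P_t r^*\| \leq M\epsilon\|r^*\|$, contributing an $O(\epsilon^2)$ bias of order $f(w^*)$.

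I would then combine these via Young's inequality $\|a + b\|^2 \leq (1+\delta)\|a\|^2 + (1 + 1/\delta)\|b\|^2$ with $\delta$ tuned so the asymmetry factor $\kappa$ and the bias coefficient come out matched, producing a one-step recursion
\[
f_{t+1} \leq \kappa\gamma_1\, f_t + \kappa^2(\kappa - \gamma_1)\, f(w^*),
\]
which, iterated and summed as a geometric series $\sum_{s=0}^{t-1}(\kappa\gamma_1)^s \leq 1/(1-\kappa\gamma_1)$, reproduces the theorem exactly. The principal obstacle is the non-symmetry of $X X^\top P_t$: I cannot read contraction factors off its spectrum directly and must symmetrize (e.g.\ by pivoting to $P_t^{1/2} X^\top X P_t^{1/2}$, which shares eigenvalues with $X^\top P_t X$). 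Correspondingly, calibrating $\delta$ in Young's inequality so that exactly the constants $\kappa\gamma_1$ and $\kappa^2(\kappa - \gamma_1)$ emerge, rather than merely some $\gamma < 1$ and some finite constant, is the delicate bookkeeping step that turns a qualitative contraction argument into the quantitative statement of the theorem.
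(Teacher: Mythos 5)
Your route is genuinely different from the paper's: you contract the error $Xe_t$ toward $w^*$ directly and treat the encoding mismatch as an additive bias $c_\alpha XX^\top(P_t-I)r^*$, whereas the paper contracts the \emph{encoded instantaneous} objective $\wtild f^A$ toward its own minimizer $\wtild w_t^*$ via the descent lemma and strong convexity, converts back to $f$ through the two-sided sandwich $\lambda_{\min}f \leq \wtild f^A \leq \lambda_{\max}f$ (Lemma~\ref{lem:final_arg}), and only then invokes $f(\wtild w_t^*)\leq \kappa^2 f(w^*)$ (Lemma~\ref{lem:solution_ball}). Your decomposition dispenses with Lemma~\ref{lem:solution_ball} entirely. (A minor slip: $r^*$ is \emph{not} invariant under your update map, since $X^\top P_t r^*\neq 0$ in general; only the orthogonality $\langle Xe_t,r^*\rangle=0$ needed for the Pythagorean identity holds. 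You correct this yourself one paragraph later.)

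There are, however, two genuine gaps. First, the contraction of the homogeneous part is asserted rather than derived: you need $\|X(I-c_\alpha X^\top P_t X)e_t\|^2\leq \kappa\gamma_1\|Xe_t\|^2$, and because $X^\top P_t X$ and $X^\top X$ do not commute, placing the spectrum of $X^\top P_t X$ in $[(1-\epsilon)\mu,(1+\epsilon)M]$ does not let you read this off; your proposed symmetrization $P_t^{1/2}X^\top XP_t^{1/2}$ controls the wrong quadratic form (what you need to bound is $e_t^\top(I-c_\alpha X^\top P_tX)X^\top X(I-c_\alpha X^\top P_tX)e_t$ relative to $e_t^\top X^\top Xe_t$). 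The clean fix is to run a descent-lemma-plus-PL argument on the shifted function $\|S_AX(w-w^*)\|^2$ and then sandwich it between $(1\pm\epsilon)\|X(w-w^*)\|^2$ --- but at that point you have reconstructed Lemma~\ref{lem:final_arg}. Second, and more fundamentally, the Young step cannot ``come out matched'': splitting $\|a+b\|^2\leq(1+\delta)\|a\|^2+(1+1/\delta)\|b\|^2$ with any $\delta>0$ inflates the geometric rate to $(1+\delta)\kappa\gamma_1>\kappa\gamma_1$, and the cross term has no sign that would let you avoid the split; meanwhile the resulting bias is $O(\epsilon^2/\delta)\,f(w^*)$ added to $f_t-f(w^*)$, which does not match the theorem's additive constant $\kappa^2(\kappa-\gamma_1)/(1-\kappa\gamma_1)$ (equal to $1$ at $\epsilon=0$ and $1+O(\epsilon)$ nearby). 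So the ``delicate bookkeeping'' you defer is not merely delicate --- no calibration of $\delta$ yields the stated rate $(\kappa\gamma_1)^t$. Completed, your argument would prove a statement of the same shape (deterministic linear convergence to a $\mathrm{poly}(\epsilon)\cdot f(w^*)$ neighborhood), with a strictly worse rate constant and a differently scaled bias, but not Theorem~\ref{th:gd} as written. The paper avoids both issues by never forming a cross term: the one-step contraction is toward $\wtild w_t^*$, an exact minimizer of the instantaneous objective, and every factor of $\kappa$ enters through the sandwich or through Lemma~\ref{lem:solution_ball} rather than through a Young split.
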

The proof is provided in 
Appendix~B,
which relies on the fact that the solution to the effective ``instantaneous" problem corresponding to the subset $A_t$ lies in the set $\{ w:f(w) \leq \kappa^2 f(w^*) \}$, and therefore each gradient descent step attracts the estimate towards a point in this set, which must eventually converge to this set. Note that in order to guarantee linear convergence, we need $\kappa \gamma_1 <1$, which can be ensured by property \eqref{eq:rip}.

Theorem~\ref{th:gd} shows that gradient descent over the encoded problem, based on updates from only $k<m$ nodes, results in \emph{deterministically} linear convergence to a neighborhood of the true solution $w^*$, for sufficiently large $k$, as opposed to convergence in expectation. 
Note that by property \eqref{eq:rip}, by controlling the redundancy factor $\beta$ and the number of nodes $k$ waited for in each iteration, one can control the approximation guarantee. 
For $k=m$ and $S$ designed properly (see Section~\ref{sec:code}), then $\kappa=1$ and  the optimum value of the original function $f\lp w^*\rp$ is  reached. 

\paragraph{Limited-memory-BFGS.} Although L-BFGS is originally a batch method, requiring updates from all nodes, its stochastic variants have also been proposed recently \cite{MokhtariRibeiro_15, BerahasNocedal_16}. The key modification to ensure convergence is that the Hessian estimate must be computed via  gradient components that are common in two consecutive iterations, \emph{i.e.}, 
from the nodes in $A_t \cap A_{t-1}$. We adapt this technique to our scenario.
For $t>0$, define $u_t := w_t - w_{t-1}$, and
\begin{align*}
r_t &: = \frac{m}{2\beta n\left| A_t \cap A_{t-1}\right|}\sum_{i \in A_t \cap A_{t-1}} \lp g_i(w_t) - g_i(w_{t-1})\rp.
\end{align*}
Then once the gradient terms $\lbp g_t\rbp_{i \in A_t}$ are collected, the descent direction is computed by $d_t = -B_t \wtild g_t$,
where $B_t$ is the inverse Hessian estimate for iteration $t$, which is computed by
\begin{align*}
B_t^{(\ell+1)} = V_{j_{\ell}}^\top B_t^{(\ell)} V_{j_{\ell}} + \rho_{j_{\ell}} u_{j_{\ell}}u_{j_{\ell}}^\top, \;\;\; \rho_k = \frac{1}{r_k^\top u_k}, \;\;\; V_k = I - \rho_k r_k u_k^\top
\end{align*}
with $j_{\ell}= t-\wtild \sigma + \ell$, $B_t^{(0)} = \frac{r_t^\top r_t}{r_t^\top u_t} I$, and $B_t := B_t^{(\wtild \sigma)}$ with $\wtild \sigma := \min\lbp t, \sigma\rbp$, where $\sigma$ is the L-BFGS memory length. Once the descent direction $d_t$ is computed, the step size is determined through exact line search, using \eqref{eq:exact_ls}, with back-off factor $\nu=\frac{1-\epsilon}{1+\epsilon}$, where $\epsilon$ is as in \eqref{eq:rip}.

For our convergence result for L-BFGS, we need another assumption on the matrix $S$, in addition to \eqref{eq:rip}. Defining $\breve S_t = \lb S_i\rb_{i \in A_t \cap A_{t-1}}$ for $t>0$, we assume that for some $\delta>0$, 
\begin{align}
    \delta I \preceq \breve S^\top_t \breve S_t \label{eq:overlap}
\end{align}
for all $t>0$. 
Note that this requires that one should wait for sufficiently many nodes to finish so that the overlap set $A_t \cap A_{t-1}$ has more than a fraction $\frac{1}{\beta}$ of all nodes,
and thus the matrix $\breve S_t$ can be full rank. This is satisfied if $\eta \geq \frac{1}{2} + \frac{1}{2\beta}$ in the worst-case, and under the assumption that node delays are i.i.d., it is satisfied in expectation if $\eta \geq \frac{1}{\sqrt{\beta}}$. However, this condition is only required for a worst-case analysis, and  the algorithm may perform well in practice even when this condition is not satisfied. The following lemma shows the stability of the Hessian estimate.

\begin{lemma}\label{lem:hessian_stability}
If \eqref{eq:overlap} is satisfied, then there exist constants $c_1, c_2 >0$ such that for all $t$, the inverse Hessian estimate $B_t$ satisfies $c_1 I \preceq B_t\preceq c_2 I$.
\end{lemma}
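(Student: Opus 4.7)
The plan is to exploit the quadratic structure of the objective to express each curvature pair $(u_t,r_t)$ in terms of an ``effective Hessian,'' use \eqref{eq:rip} and \eqref{eq:overlap} to bound that Hessian uniformly in $t$, and then invoke the classical trace-determinant argument for BFGS stability. First, since $f$ is quadratic, $g_i(w_t) - g_i(w_{t-1}) = \wtild X_i^\top \wtild X_i u_t = X^\top S_i^\top S_i X u_t$ for every $i$. Substituting into the definition of $r_t$ yields
\[
  r_t = H_t u_t, \qquad H_t := \frac{m}{2\beta n\,|A_t \cap A_{t-1}|}\, X^\top \breve S_t^\top \breve S_t X,
\]
so each curvature pair is generated by a symmetric positive semidefinite ``instantaneous'' Hessian $H_t$ that varies with $t$.

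Next, I would show that $\lambda I \preceq H_t \preceq \Lambda I$ uniformly for some $\lambda,\Lambda > 0$. For the lower bound, combine \eqref{eq:overlap} with $X^\top X \succeq \mu I$ and $|A_t \cap A_{t-1}| \leq m$ to get $H_t \succeq \frac{\delta\mu}{2\beta n} I =: \lambda I$. For the upper bound, since $A_t \cap A_{t-1} \subseteq A_t$, monotonicity gives $\breve S_t^\top \breve S_t = \sum_{i \in A_t \cap A_{t-1}} S_i^\top S_i \preceq S_{A_t}^\top S_{A_t} \preceq (1+\epsilon) I$ by \eqref{eq:rip}, which with $X^\top X \preceq M I$ and $|A_t \cap A_{t-1}| \geq 1$ gives $H_t \preceq \Lambda I$. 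These two-sided bounds transfer immediately to the curvature pairs: $u_t^\top r_t = u_t^\top H_t u_t \in [\lambda\|u_t\|^2,\Lambda\|u_t\|^2]$ and $\|r_t\|^2 = \|H_t u_t\|^2 \in [\lambda^2\|u_t\|^2,\Lambda^2\|u_t\|^2]$. In particular, $u_t^\top r_t > 0$ so the update is well defined, and the initialization $B_t^{(0)} = \tfrac{r_t^\top r_t}{r_t^\top u_t} I$ itself already satisfies $\lambda I \preceq B_t^{(0)} \preceq \Lambda I$.

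The final step is the classical trace-determinant argument for BFGS (Powell, Byrd--Nocedal, and its subsampled adaptation used in Berahas--Nocedal). The uniform curvature-pair bounds established above imply that each rank-two update $B_t^{(\ell+1)} = V_{j_\ell}^\top B_t^{(\ell)} V_{j_\ell} + \rho_{j_\ell} u_{j_\ell} u_{j_\ell}^\top$ perturbs $\mathrm{tr}(B_t^{(\ell)})$ and $\det(B_t^{(\ell)})$ by amounts controlled solely by $\lambda$ and $\Lambda$. Iterating over the at most $\wtild\sigma \leq \sigma$ updates yields constants $c_1,c_2 > 0$ depending only on $\sigma,\lambda,\Lambda$ with $c_1 I \preceq B_t \preceq c_2 I$ for every $t$. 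The main technical obstacle is exactly this last step: the trace bound easily controls the largest eigenvalue, but the matching lower bound on the smallest eigenvalue requires the determinant estimate, which in turn only closes because $\sigma$ is fixed and because \eqref{eq:overlap} and \eqref{eq:rip} together supply \emph{two-sided} uniform bounds on $H_t$; dropping either would allow some $B_t$ to become arbitrarily ill-conditioned.
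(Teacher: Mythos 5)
Your proposal is correct and follows essentially the same route as the paper's proof: derive two-sided bounds on the curvature pairs ($r_t^\top u_t \geq \lambda\|u_t\|^2$ from \eqref{eq:overlap} together with $X^\top X \succeq \mu I$, and the upper bound from \eqref{eq:rip} together with $X^\top X \preceq MI$), then apply the standard trace-determinant recursion over the $\wtild\sigma \leq \sigma$ BFGS updates. The only difference is that you invoke the trace-determinant step by citation while the paper writes out the two recursions explicitly; the constants you obtain are slightly looser but the argument closes identically.
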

The proof, provided in Appendix A, is based on the well-known trace-determinant method. Using Lemma~\ref{lem:hessian_stability}, we can show the following result.
\begin{theorem}\label{th:lbfgs}
Let $f_t = f(w_t)$, where $w_t$ is computed using L-BFGS as described above, with gradient updates from machines $A_t$, and line search updates from machines $D_t$. If $S$ satisfies \eqref{eq:rip} and \eqref{eq:overlap}, for all sequences of $\{ A_t\}, \{D_t\}$ with $\left| A_t \right|=\left| D_t\right|=k$,
\begin{align*}
f_t \leq \lp \kappa \gamma_2\rp^t f_0 + \frac{\kappa^2(\kappa-\gamma_2)}{1-\kappa\gamma_2}f\lp w^*\rp,
\end{align*}
where $\kappa = \frac{1+\epsilon}{1-\epsilon}$, and $\gamma_2 = \lp 1-\frac{4\mu c_1 c_2}{M\lp c_1+c_2\rp^2}\rp$, and $f_0=f(w_0)$ is the initial objective value. 
\end{theorem}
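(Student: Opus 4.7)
The plan is to mirror the template used for Theorem~1, replacing the constant-step gradient argument by a quasi-Newton argument that exploits Lemma~1 and the line-search rule \eqref{eq:exact_ls}. First, Lemma~1 supplies constants $c_1, c_2 > 0$ with $c_1 I \preceq B_t \preceq c_2 I$ uniformly in $t$. Since $d_t = -B_t \wtild g_t$, this immediately yields
\begin{align*}
c_1 \|\wtild g_t\|^2 \le \wtild g_t^\top B_t \wtild g_t = -d_t^\top \wtild g_t \le c_2 \|\wtild g_t\|^2, \qquad c_1^2 \|\wtild g_t\|^2 \le \|d_t\|^2 \le c_2^2 \|\wtild g_t\|^2,
\end{align*}
so $d_t$ is a bona fide descent direction whose magnitude is controlled by $\|\wtild g_t\|$.

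Next, I would use property \eqref{eq:rip} to pin down the exact-line-search step size from \eqref{eq:exact_ls}. The denominator satisfies $d_t^\top \wtild X_D^\top \wtild X_D d_t = \|S_{D_t} X d_t\|^2 \in \lb (1-\epsilon)\|X d_t\|^2, (1+\epsilon)\|X d_t\|^2 \rb$ by \eqref{eq:rip} applied to $S_{D_t}$, and $\mu \|d_t\|^2 \le \|X d_t\|^2 \le M \|d_t\|^2$, while the numerator is already controlled by the display above. With the choice $\nu = \frac{1-\epsilon}{1+\epsilon}$, these jointly confine $\alpha_t$ to a regime on which the exact quadratic expansion
\begin{align*}
f(w_{t+1}) = f(w_t) + \alpha_t \nabla f(w_t)^\top d_t + \frac{\alpha_t^2}{2n}\|X d_t\|^2
\end{align*}
admits a sufficient-decrease bound. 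Treating $\wtild f_A(w) := \frac{1}{2\beta\eta n}\|S_A (Xw-y)\|^2$ as the instantaneous effective objective, which by \eqref{eq:rip} obeys $\frac{1-\epsilon}{\beta\eta}\, f(w) \le \wtild f_A(w) \le \frac{1+\epsilon}{\beta\eta}\, f(w)$, and applying a Kantorovich-type inequality on the preconditioner whose eigenvalues lie in $[c_1,c_2]$, produces a per-step contraction on $\wtild f_{A_t}$ at rate $\gamma_2 = 1 - \frac{4 \mu c_1 c_2}{M(c_1+c_2)^2}$. The ratio $4 c_1 c_2/(c_1+c_2)^2$, maximized at $1$ when $c_1=c_2$, is the quasi-Newton analogue of the $4\zeta(1-\zeta)$ factor appearing in $\gamma_1$.

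Finally, transporting this instantaneous contraction back to the original objective via the two-sided RIP bound incurs a multiplicative factor of $\kappa=\frac{1+\epsilon}{1-\epsilon}$, producing an affine recursion of the form $f_{t+1} \le \kappa\gamma_2\, f_t + C\cdot f(w^*)$; the additive constant $C$ is identified exactly as in Theorem~1, using the fact that the instantaneous minimizer $w^*_{A_t}$ of $\wtild f_{A_t}$ lies in $\lbp w:f(w)\le \kappa^2 f(w^*)\rbp$ by \eqref{eq:rip}. Unrolling then gives the claimed bound with residual $\frac{\kappa^2(\kappa-\gamma_2)}{1-\kappa\gamma_2}\, f(w^*)$. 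The main obstacle is the step-size control: because the numerator of $\alpha_t$ uses $\wtild g_t$ computed from $A_t$ while the denominator uses curvature from $D_t \neq A_t$, the line search is not exact for any single quadratic, and the resulting discrepancy must be absorbed by $\nu$ and by the overlap condition \eqref{eq:overlap} that underlies Lemma~1. Carrying out the Kantorovich-type argument in this asymmetric, preconditioned setting is the bulk of the work.
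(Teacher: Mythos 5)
Your proposal follows essentially the same route as the paper's proof: a quadratic expansion of the instantaneous objective $\wtild f^A$ under the exact line search step with back-off $\nu$, a Kantorovich-type bound on the preconditioner $B_t$ (the paper's Lemma~\ref{lem:rotation_bound} combined with Lemma~\ref{lem:hessian_stability}) yielding the per-step contraction factor $\gamma_2$, and then the transfer to $f$ via the two-sided spectral sandwich (Lemma~\ref{lem:final_arg}) together with the solution-ball bound $f(\wtild w_t^*)\le\kappa^2 f(w^*)$ (Lemma~\ref{lem:solution_ball}). You also correctly identify that the $A_t\neq D_t$ asymmetry is absorbed by bounding both quadratic forms with $\lambda_{\min},\lambda_{\max}$ and choosing $\nu=\frac{1-\epsilon}{1+\epsilon}$, which is exactly how the paper closes that gap.
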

The proof is provided in 
Appendix~B.
Similar to Theorem~\ref{th:gd}, the proof is based on the observation that the solution of the effective problem at time $t$ lies in a bounded set around the true solution $w^*$. As in gradient descent, coding enables linear convergence deterministically, unlike the stochastic and multi-batch variants of L-BFGS \cite{MokhtariRibeiro_15, BerahasNocedal_16}. 

\paragraph{Generalizations.} Although we focus on quadratic cost functions and two specific algorithms, our approach can potentially be generalized for objectives of the form $\left\| Xw-y\right\|^2 + h(w)$ for a simple convex function $h$, \emph{e.g.}, LASSO; or constrained optimization $\min_{w \in \mathcal{C}} \left\| Xw-y\right\|^2$ (see \cite{KarakusSun_17}); as well as other first-order algorithms used for such problems, \emph{e.g.}, FISTA \cite{BeckTeboulle_09}. In the next section we demonstrate that the codes we consider have desirable properties that readily extend to such scenarios.

\section{Code Design}\label{sec:code}
We consider three classes of coding matrices: tight frames, fast transforms, and random matrices.

\paragraph{Tight frames.} A unit-norm \emph{frame} for $\mathbb{R}^n$ is a set of vectors $F = \lbp \phi_i \rbp_{i=1}^{n\beta}$ with $\|\phi_i\|=1$, where $\beta \geq 1$, such that there exist constants $\xi_1 \geq \xi_2>0$ such that, for any $u \in \mathbb{R}^n$, 
\begin{align*}
    \xi_1 \| u\|^2 \leq \sum_{i=1}^{n\beta} \left| \langle u, \phi_i\rangle\right|^2\leq \xi_2 \| u\|^2.
\end{align*}
The frame is  \emph{tight} if the above satisfied with $\xi_1 = \xi_2$. In this case, it can be shown that the constants are equal to the redundancy factor of the frame, \emph{i.e.}, $\xi_1=\xi_2=\beta$. If we form $S \in \mathbb{R}^{(\beta n) \times n}$ by rows that are a \emph{tight frame}, then we have $S^\top S = \beta I$, which ensures $\| Xw - y\|^2=\tfrac{1}{\beta}\| SXw - Sy\|^2$. 
Then for any solution $\wtild w^*$ to the encoded problem (with $k=m$),  
\begin{align*}
\nabla \wtild f(\wtild w^*) = X^\top S^\top S (X \wtild w^*-y) = \beta (X\wtild w^*-y)^\top X = \beta \nabla f( \wtild w^*).
\end{align*}
Therefore, the solution to the encoded problem satisfies the optimality condition for the original problem as well:
\begin{align*}
     \nabla \wtild f(\wtild w^*)= 0, \quad\Leftrightarrow \quad \nabla f( \wtild w^*)= 0, 
\end{align*}
and if $f$ is also strongly convex, then $\wtild w^* = w^*$ is the unique solution. Note that since the computation is coding-oblivious, this is not true in general for an arbitrary full rank matrix, and this is, in addition to property \eqref{eq:rip}, a desired property of the encoding matrix. In fact, this equivalency extends beyond smooth unconstrained optimization, in that
\begin{align*}
    \left\langle \nabla \wtild f(\wtild w^*),  w - \wtild w^*\right\rangle \ge 0,~ \; \forall w \in \mathcal C\quad \Leftrightarrow \quad\left\langle \nabla f( \wtild w^*),  w - \wtild w^*\right\rangle \ge 0, ~\forall w \in \mathcal C
\end{align*}
for any convex constraint set $\mathcal C$,
as well as
\begin{align*}
    -\nabla \wtild f(\wtild w^*) \in \partial h(\wtild w^*),~ \; \quad \Leftrightarrow \quad -\nabla f(\wtild w^*)\in \partial h(\wtild w^*), 
\end{align*}
for any non-smooth convex objective term $h(x)$, where $\partial h$ is the subdifferential of $h$. 
This means that tight frames can be promising encoding matrix candidates for non-smooth and constrained optimization too. In \cite{KarakusSun_17}, it was shown that when $\lbp A_t \rbp$ is static, equiangular tight frames allow for a close approximation of the solution for constrained problems. 

A tight frame is \emph{equiangular} if $\left| \langle \phi_i, \phi_j\rangle \right|$ is constant across all pairs $(i,j)$ with $i \neq j$.

\begin{proposition}[Welch bound \cite{Welch_74}]\label{prop:welch}
Let $F = \lbp \phi_i\rbp_{i=1}^{n\beta}$ be a tight frame. Then $\omega(F) \geq \sqrt{\frac{\beta-1}{2n\beta-1}}$. Moreover, equality is satisfied if and only if $F$ is an equiangular tight frame.
\end{proposition}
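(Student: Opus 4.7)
The plan is to bound $\omega(F)^2$ from below by computing the second moment of the Gram matrix of $F$ in two ways and then applying the ``max $\geq$ average'' inequality. I would form the synthesis matrix $\Phi := \lb \phi_1\;\;\phi_2\;\;\cdots\;\;\phi_{n\beta}\rb \in \mathbb{R}^{n \times n\beta}$ and the Gram matrix $G := \Phi^\top \Phi \in \mathbb{R}^{n\beta \times n\beta}$, so that $G_{ij} = \langle \phi_i, \phi_j\rangle$. The discussion in the paragraph preceding the proposition shows that the tight frame condition with $\xi_1 = \xi_2 = \beta$ gives $\Phi \Phi^\top = \beta I_n$. Since $\Phi \Phi^\top$ and $\Phi^\top \Phi$ share the same nonzero spectrum, $G$ has eigenvalue $\beta$ with multiplicity $n$ and eigenvalue $0$ with multiplicity $n\beta - n$.

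Next I would compute $\trace{G^2}$ in two ways. Spectrally, $\trace{G^2} = n\beta^2$. Entrywise, $\trace{G^2} = \sum_{i,j} G_{ij}^2 = n\beta + \sum_{i \neq j} G_{ij}^2$, using $\|\phi_i\|=1$ on the diagonal. Equating the two expressions yields $\sum_{i \neq j} G_{ij}^2 = n\beta(\beta-1)$. There are $n\beta(n\beta - 1)$ ordered off-diagonal pairs, so
\begin{align*}
\omega(F)^2 \;=\; \max_{i \neq j} G_{ij}^2 \;\geq\; \frac{1}{n\beta(n\beta - 1)} \sum_{i \neq j} G_{ij}^2 \;=\; \frac{\beta - 1}{n\beta - 1},
\end{align*}
which is the classical Welch lower bound. (I suspect the $2n\beta-1$ in the statement is a typographical error for $n\beta-1$; the argument sketched above yields this sharper value, and the classical form is what appears in \cite{Welch_74}.)

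For the equality characterization, the ``max $\geq$ average'' step is tight if and only if every off-diagonal $G_{ij}^2$ equals the common average, \emph{i.e.}, $|\langle \phi_i, \phi_j\rangle|$ is constant across $i \neq j$. This is exactly the definition of equiangularity, so, combined with the tight frame hypothesis under which the derivation was carried out, equality holds precisely for equiangular tight frames. I do not anticipate a substantive obstacle; the only step that must be executed carefully is the spectral identification of $G$, which depends crucially on the identity $\Phi \Phi^\top = \beta I_n$ supplied by the tight frame hypothesis, without which the trace-of-$G^2$ computation would not close.
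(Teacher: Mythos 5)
Your proof is correct, and it is the standard double-counting argument for the Welch bound; note that the paper itself does not prove Proposition~1 at all --- it is imported as a classical result with a citation to Welch's 1974 paper --- so there is no in-paper proof to compare against. Your two evaluations of $\trace{G^2}$ are both right: the tight-frame hypothesis gives $\Phi\Phi^\top = \beta I_n$, hence $\trace{G^2} = n\beta^2$ spectrally, while the entrywise expansion with $\|\phi_i\|=1$ gives $n\beta + \sum_{i\neq j}\left|\langle\phi_i,\phi_j\rangle\right|^2$, and the max-versus-average step over the $n\beta(n\beta-1)$ ordered off-diagonal pairs yields $\omega(F)^2 \geq \frac{\beta-1}{n\beta-1}$ with equality precisely when all off-diagonal inner products have equal modulus, i.e.\ when the (already tight) frame is equiangular. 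You are also right to flag the constant: for $N=n\beta$ unit vectors in $\mathbb{R}^n$ the Welch bound is $\sqrt{\frac{N-n}{n(N-1)}}=\sqrt{\frac{\beta-1}{n\beta-1}}$, so the $2n\beta-1$ in the paper's statement appears to be a typo --- indeed, with that denominator the stated ``equality iff ETF'' clause could not hold, since an ETF meets your sharper bound and therefore strictly exceeds the weaker one. One small observation: because tightness is assumed in the hypothesis, you legitimately skip the Cauchy--Schwarz step $\trace{G^2}\geq (\trace{G})^2/n$ that the general (non-tight) Welch argument needs, which is why your equality analysis reduces cleanly to equiangularity alone.
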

Therefore, an ETF minimizes the correlation between its individual elements, making each submatrix $ S_A^\top S_A$ as close to orthogonal as possible, which is promising in light of property \eqref{eq:rip}. 
We specifically evaluate Paley \cite{Paley_33, GoethalsSeidel_67} and Hadamard ETFs \cite{Szollosi_13} (not to be confused with Hadamard matrix, which is discussed next) in our experiments. We also discuss Steiner ETFs \cite{FickusMixon_12} in Appendix D, which enable efficient implementation.

\begin{figure}
\centering
\begin{minipage}{.46\textwidth}
  \centering
  \includegraphics[scale=0.36]{./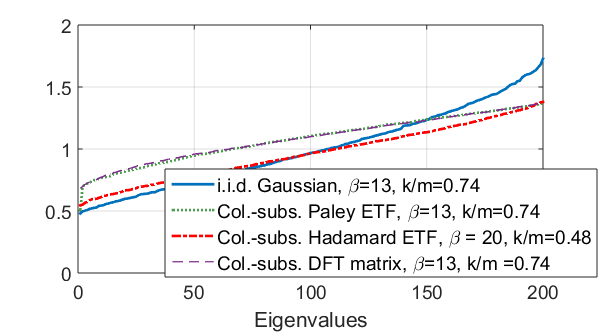}
  \caption{Sample spectrum of $S_A^\top S_A$ for various constructions with high redundancy, and relatively small $k$ (normalized).}
  \label{fig:high_red}
\end{minipage}\hfill
\begin{minipage}{.46\textwidth}
  \centering
  \includegraphics[scale=0.36]{./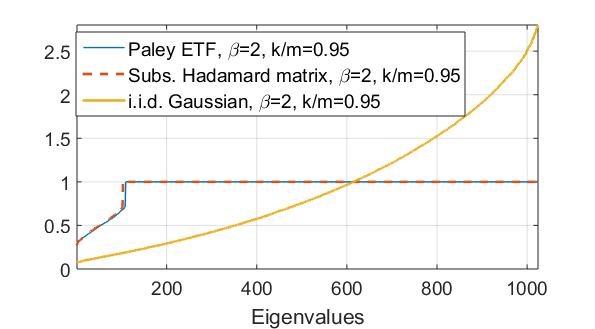}
  \caption{Sample spectrum of $S_A^\top S_A$ for various constructions with low redundancy, and large $k$ (normalized).}
  \label{fig:low_red}
\end{minipage}
\end{figure}

\paragraph{Fast transforms.} Another computationally efficient method for encoding is to use fast transforms: Fast Fourier Transform (FFT), if $S$ is chosen as a subsampled DFT matrix, and the Fast Walsh-Hadamard Transform (FWHT), if $S$ is chosen as a subsampled real Hadamard matrix. In particular, one can insert rows of zeroes at random locations into the data pair $(X,y)$, and then take the FFT or FWHT of each column of the augmented matrix. This is equivalent to a randomized Fourier or Hadamard ensemble, which is known to satisfy the RIP with high probability \cite{CandesTao_06}.

\paragraph{Random matrices.} A natural choice of encoding is using i.i.d. random matrices. Although such random matrices do not have the computational advantages of fast transforms or the optimality-preservation property of tight frames, their eigenvalue behavior can be characterized analytically. In particular, using the existing results on the eigenvalue scaling of large i.i.d. Gaussian matrices \cite{Geman_80, Silverstein_85} and union bound, it can be shown that
\begin{align}
    &\Prob{\max_{A:\left|A\right|=k}\lambda_{\max}\lp \frac{1}{\beta \eta n} S_A^\top S_A\rp > \lp 1 + \sqrt{\frac{1}{\beta \eta}} \rp^2} \to 0, \label{eq:random1}\\& \Prob{\min_{A:\left|A\right|=k}\lambda_{\min}\lp \frac{1}{\beta \eta n} S_A^\top S_A\rp < \lp 1 - \sqrt{\frac{1}{\beta \eta}} \rp^2} \to 0, 
    \label{eq:random2}
\end{align}
as $n \to \infty$, where $\sigma_i$ denotes the $i$th singular value. Hence, for sufficiently large redundancy and problem dimension, i.i.d. random matrices are good candidates for encoding as well. However, for finite $\beta$, even if $k=m$, in general for this encoding scheme the optimum of the original problem is not recovered exactly.



\paragraph{Property \eqref{eq:rip} and redundancy requirements.}
Using the analytical bounds \eqref{eq:random1}--\eqref{eq:random2} on i.i.d. Gaussian matrices, one can see that such matrices satisfy \eqref{eq:rip} with $\epsilon = O\lp \frac{1}{\sqrt{\beta \eta}}\rp$, independent of problem dimensions or number of nodes $m$.
Although we do not have tight eigenvalue bounds for subsampled ETFs, numerical evidence (Figure~\ref{fig:high_red}) suggests that they may satisfy \eqref{eq:rip} with smaller $\epsilon$ than random matrices, 
and thus we believe that the required redundancy in practice is even smaller for ETFs. 

Note that 
our theoretical results focus on the extreme eigenvalues due to a worst-case analysis; in practice, 
most of the energy of the gradient will be on the eigen-space associated with the bulk of the eigenvalues, which the following proposition suggests can be mostly 1 (also see Figure~\ref{fig:low_red}), which means even if \eqref{eq:rip} is not satisfied, the gradient (and the solution) can be approximated closely for a modest redundancy, such as $\beta=2$. The following result is a consequence of the Cauchy interlacing theorem, and the definition of tight frames.
\begin{proposition}\label{prop:etf_bulk}
If the rows of $S$ are chosen to form an ETF with redundancy $\beta$, then for $\eta \geq 1- \frac{1}{\beta}$, $\frac{1}{\beta}S_A^\top S_A$ has $n(1-\beta\eta)$ eigenvalues equal to 1.
\end{proposition}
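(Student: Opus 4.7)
The plan is to follow the hint in the statement and apply the Cauchy interlacing theorem to the Gram matrix of the frame. First I would form $G := SS^\top \in \mathbb{R}^{\beta n \times \beta n}$. Because the rows of $S$ are a tight frame for $\mathbb{R}^n$ with redundancy $\beta$, one has $S^\top S = \beta I_n$; since $G$ and $S^\top S$ share the same nonzero spectrum, $G$ has exactly $n$ eigenvalues equal to $\beta$ and $(\beta-1)n$ eigenvalues equal to $0$.

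Next I would identify $S_A S_A^\top$ as a \emph{principal} submatrix of $G$. The key observation is that $S_A$ is obtained from $S$ by selecting the row-indices held by nodes in $A$ (a set of size $\eta\beta n$), so $S_A S_A^\top$ is precisely $G$ restricted to these rows \emph{and} the matching columns, not merely a compression. Cauchy's interlacing theorem then yields
\begin{equation*}
\lambda_{k + (1-\eta)\beta n}(G) \;\le\; \lambda_k(S_A S_A^\top) \;\le\; \lambda_k(G), \qquad k = 1, \ldots, \eta\beta n.
\end{equation*}

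For every index $k$ with $k + (1-\eta)\beta n \le n$, i.e.\ $k \le n - (1-\eta)\beta n$, both the upper and lower bounds above equal $\beta$, forcing $\lambda_k(S_A S_A^\top) = \beta$. This range is nonempty precisely because of the hypothesis $\eta \ge 1 - 1/\beta$, which makes $n - (1-\eta)\beta n$ nonnegative and equal to the desired multiplicity. Finally, since $S_A^\top S_A$ and $S_A S_A^\top$ carry the same nonzero eigenvalues, dividing by $\beta$ transfers the pinned eigenvalues at $\beta$ into pinned eigenvalues at $1$ for $\tfrac{1}{\beta}S_A^\top S_A$, completing the argument.

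The entire proof is structural, with no estimates to control, so I do not anticipate a substantive obstacle. The only care needed is bookkeeping: verifying that passing from $S_A$ to the selection of rows/columns of $G$ truly gives a principal submatrix (hence that Cauchy interlacing applies verbatim rather than Weyl-type bounds), and tracking the index shift $(1-\eta)\beta n$ so that the ``pinching'' range is correctly identified with the hypothesized regime $\eta \ge 1-1/\beta$.
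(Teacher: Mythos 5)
Your argument is exactly the one the paper intends (it offers no written proof, only the remark that the result ``is a consequence of the Cauchy interlacing theorem and the definition of tight frames''), and every structural step is sound: $S^\top S=\beta I$ gives $SS^\top$ eigenvalue $\beta$ with multiplicity $n$ and $0$ with multiplicity $(\beta-1)n$; $S_AS_A^\top$ is a genuine principal submatrix of $SS^\top$ of size $\eta\beta n$; interlacing pins $\lambda_k(S_AS_A^\top)=\beta$ for all $k\le n-(1-\eta)\beta n$; and the nonzero spectra of $S_AS_A^\top$ and $S_A^\top S_A$ coincide. The one flaw is your closing assertion that $n-(1-\eta)\beta n$ ``equals the desired multiplicity'': it does not, since $n\bigl(1-\beta(1-\eta)\bigr)\ne n(1-\beta\eta)$ except when $\eta=\tfrac12$. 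What your (correct) computation actually shows is that the multiplicity as printed in the proposition is a typo --- under the hypothesis $\eta\ge 1-\tfrac1\beta$ the printed quantity $n(1-\beta\eta)$ is nonpositive, whereas your $n\bigl(1-\beta(1-\eta)\bigr)$ is nonnegative and tends to $n$ as $\eta\to 1$, which is the behavior the surrounding discussion (and Figure 6) clearly intends. So state the multiplicity you derived rather than forcing agreement with the printed formula; and, if you want to be fully precise, say the matrix has \emph{at least} that many eigenvalues equal to $1$, since interlacing gives a lower bound on the multiplicity.
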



\section{Numerical Results}\label{sec:numerical}
\begin{figure}
\centering
  \includegraphics[scale=0.27]{./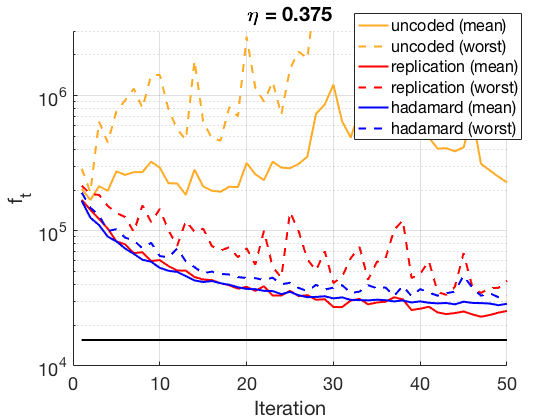}
  \includegraphics[scale=0.27]{./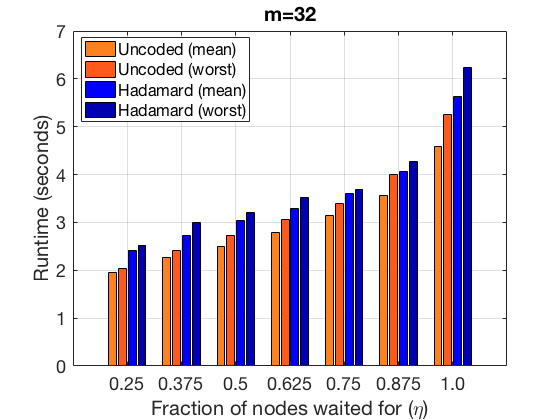}
  \caption{{\bf Left:} Sample evolution of uncoded, replication, and Hadamard (FWHT)-coded cases, for $k=12$, $m=32$. {\bf Right:} Runtimes of the schemes for different values of $\eta$, for the same number of iterations for each scheme. Note that this essentially captures the delay profile of the network, and does not reflect the relative convergence rates of different methods.}
  \label{fig:ec2}
\end{figure}

\paragraph{Ridge regression with synthetic data on AWS EC2 cluster.} We generate the elements of matrix $X$ i.i.d. $\sim N(0,1)$, the elements of $y$ i.i.d. $\sim N(0,p)$, for dimensions $(n,p)=(4096, 6000)$, and solve the problem $\min_w \frac{1}{2\beta n}\left\| \wtild Xw - \wtild y\right\|^2 + \frac{\lambda}{2} \| w\|^2$,
for regularization parameter $\lambda=0.05$. We evaluate column-subsampled Hadamard matrix with redundancy $\beta=2$ (encoded using FWHT for fast encoding), data replication with $\beta = 2$, and uncoded schemes. We implement distributed L-BFGS as described in Section~\ref{sec:convergence} on an Amazon EC2 cluster using the \texttt{mpi4py} Python package, over $m=32$ \texttt{m1.small} worker node instances, and a single \texttt{c3.8xlarge} central server instance. We assume the central server encodes and sends the data variables to the worker nodes (see Appendix D for a discussion of how to implement this more efficiently).

Figure~\ref{fig:ec2} shows the result of our experiments, which are aggregated over 20 trials. As baselines, we consider the uncoded scheme, as well as a replication scheme, where each uncoded partition is replicated $\beta=2$ times across nodes, and the server uses the faster copy in each iteration. It can be seen from the right figure that one can speed up computation by reducing $\eta$ from 1 to, for instance, 0.375, resulting in more than $40\%$ reduction in the runtime. Note that in this case, uncoded L-BFGS fails to converge, whereas the Hadamard-coded case stably converges. We also observe that the data replication scheme converges on average, but in the worst case, the convergence is much less smooth, since the performance may deteriorate if both copies of a partition are delayed. 




\paragraph{Matrix factorization on Movielens 1-M dataset.} We next apply matrix factorization on the MovieLens-1M dataset \cite{RiedlKonstan_98} for the movie recommendation task. We are given $R$, a sparse matrix of movie ratings 1--5, of dimension $\# users \times \# movies$, where $R_{ij}$ is specified if user $i$ has rated movie $j$. We withhold randomly 20\% of these ratings to form an 80/20 train/test split. 
The goal is to recover user vectors $x_i\in \R^p$ and movie vectors $y_i\in \R^p$ (where $p$ is the embedding dimension) such that $R_{ij} \approx  x_i^Ty_j + u_i + v_j +\mu$, where $u_i$, $v_j$, and $\mu$ are user, movie, and global biases, respectively. The optimization problem is given by
\begin{equation}
\min_{x_i, y_j, u_i, v_j} \sum_{i,j \text{: observed}}(R_{ij} - u_i - v_j - x_i^Ty_j-\mu)^2 + \lambda \left(\sum_i \|x_i\|_2^2 +\|u\|_2^2 + \sum_j\|y_j\|_2^2 + \|v\|_2^2\right).
\label{eq:movielens_full}
\end{equation}
We choose $\mu = 3$, $p = 15$, and $\lambda = 10$, which achieves a test RMSE 0.861, close to the current best test RMSE on this dataset using matrix factorization\footnote{\texttt{http://www.mymedialite.net/examples/datasets.html}}. 

\begin{figure}
\centering
\begin{minipage}{.64\textwidth}
  \centering
    \includegraphics[width=1.55in]{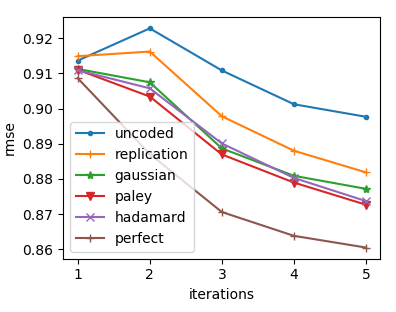}
    \includegraphics[width=1.55in]{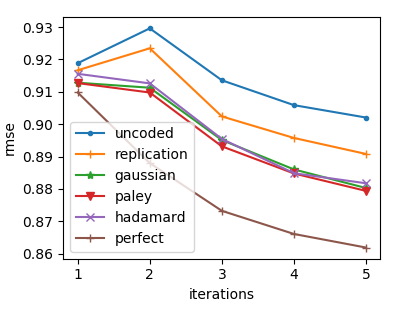}
    \includegraphics[width=1.55in]{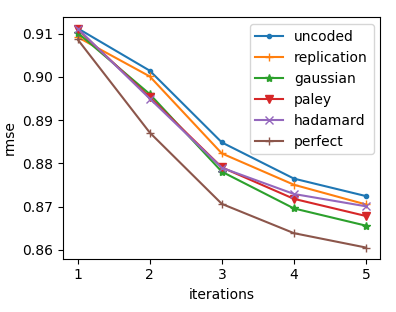}
    \includegraphics[width=1.55in]{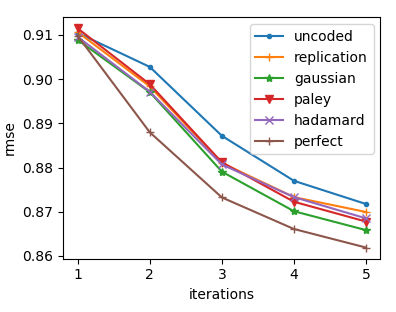}
  \caption{Test RMSE for $m = 8$ (left) and $m = 24$ (right) nodes, where the server waits for $k = m / 8$ (top) and $k = m/2$ (bottom) responses. ``Perfect" refers to the case where $k=m$.}
  \label{fig:movielens_perf}
\end{minipage}\hfill
\begin{minipage}{.33\textwidth}
\vfill
  \centering
    \includegraphics[width=1.55in]{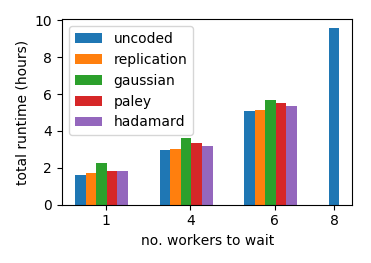}
    \includegraphics[width=1.55in]{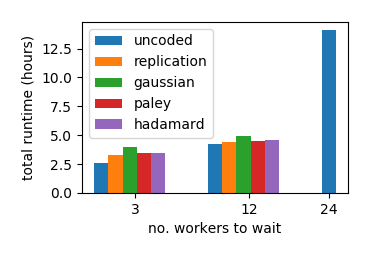}
    \caption{Total runtime with $m=8$ and $m=24$ nodes for different values of $k$, under fixed 100 iterations for each scheme.}
  \label{fig:movielens_time}
\end{minipage}
\end{figure}
Problem \eqref{eq:movielens_full} is often solved using alternating minimization, minimizing first over all $\lp x_i, u_i\rp$, and then all $\lp y_j, v_j\rp$, in repetition. Each such step further decomposes by row and column, made smaller by the sparsity of $R$. To solve for $\lp x_i, u_i\rp$, we first extract $I_i = \{j \mid r_{ij} \text{ is observed}\}$, and solve the resulting sequence of regularized least squares problems in the variables $w_i = [x_i^\top, u_i]^\top$ 
distributedly using coded L-BFGS; and repeat for $w = [y_j^\top, v_j]^\top$, for all $j$. As in the first experiment, distributed coded L-BFGS is solved by having the master node encoding the data locally, and distributing the encoded data to the worker nodes (Appendix D discusses how to implement this step more efficiently). The overhead associated with this initial step is included in the overall runtime in Figure~\ref{fig:movielens_time}.

The Movielens experiment is run on a single 32-core machine
with 256 GB RAM. 
In order to simulate network latency, an artificial delay of $\Delta \sim \text{exp}(\text{10 ms})$ is imposed each time the worker completes a task. Small problem instances ($n< 500$) are solved locally at the central server, using the built-in function \texttt{numpy.linalg.solve}.
Additionally, parallelization is only done for the ridge regression instances, in order to isolate speedup gains in the L-BFGS distribution.
To reduce overhead, we create a bank of encoding matrices $\lbp S_n\rbp$ for Paley ETF and Hadamard ETF, for $n=100, 200, \hdots, 3500$, and then given a problem instance, subsample the columns of the appropriate matrix $S_n$ to match the dimensions.
Overall, we observe that encoding overhead is amortized by the speed-up of the distributed optimization.

Figure \ref{fig:movielens_perf} gives the final performance of our distributed L-BFGS for various encoding schemes, for each of the 5 epochs, which shows that coded schemes are most robust for small $k$. 
A full table of results is given in 
Appendix~C.

\subsubsection*{Acknowledgments}

This work was supported in part by NSF grants 1314937 and 1423271.


\medskip
\small

\bibliography{Ref}

\begin{appendices}
\section{Lemmas}\label{sec:lemmas}
In the proofs, we will ignore the normalization constants on the objective functions for brevity. Let $f^A_t := \| \wtild X_A w_t - \wtild y_A\|^2$, and $f_A(w) := \| \wtild X_A w - \wtild y_A\|^2$ (we set $A \equiv A_t$). Let $\wtild w_t^*$ denote the solution to the effective ``instantaneous" problem at iteration $t$, \emph{i.e.}, $\wtild w_t^* = \argmin_w \| \wtild X_A w-\wtild y_A \|^2$.

Stronger versions of the following lemma has been proved in \cite{KarakusSun_17, PilanciWainwright_15}, but we include a weakened version of this result here for completeness.
\begin{lemma}\label{lem:solution_ball}
 For any $t$ and $\{ A_t\}$,
\begin{align*}
    f(\wtild w_t^*) \leq \kappa^2 f(w^*)
\end{align*}
\end{lemma}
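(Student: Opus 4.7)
The plan is to use the RIP-like spectral condition \eqref{eq:rip} as a sandwich between $f$ and the subsampled encoded objective $f_A$, and then exploit the definition of $\wtild w^*_t$ as the exact minimizer of $f_A$.

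First, I would rewrite $f_A$ in a form where \eqref{eq:rip} can be applied directly. Since $f_A(w)=\|\wtild X_A w-\wtild y_A\|^2 = \|S_A(Xw-y)\|^2 = (Xw-y)^\top S_A^\top S_A (Xw-y)$, the assumption $(1-\epsilon)I\preceq S_A^\top S_A\preceq(1+\epsilon)I$ yields the two-sided bound
\begin{equation*}
(1-\epsilon)\,f(w) \;\le\; f_A(w)\;\le\;(1+\epsilon)\,f(w)\qquad\text{for every }w\in\mathbb{R}^p.
\end{equation*}
This holds pointwise in $w$ since the inequality is purely quadratic in $Xw-y$.

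Next, I would use the defining optimality of $\wtild w^*_t$, namely $f_A(\wtild w^*_t)\le f_A(w)$ for all $w$, applied at $w=w^*$. Chaining the sandwich inequality with this optimality gives
\begin{equation*}
(1-\epsilon)\,f(\wtild w^*_t)\;\le\;f_A(\wtild w^*_t)\;\le\;f_A(w^*)\;\le\;(1+\epsilon)\,f(w^*),
\end{equation*}
from which one immediately obtains $f(\wtild w^*_t)\le\kappa\,f(w^*)$. Since $\kappa=\frac{1+\epsilon}{1-\epsilon}\ge 1$ and $f(w^*)\ge 0$, this in turn implies the stated (slightly weaker) bound $f(\wtild w^*_t)\le \kappa^2 f(w^*)$, which is the form convenient for use in the proofs of Theorems~\ref{th:gd} and~\ref{th:lbfgs}.

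Honestly there is no real obstacle here — the only subtlety is making sure that \eqref{eq:rip} is applied to the vector $Xw-y$ rather than to $w$ itself, and noticing that the resulting inequality is uniform in $w$, so that substituting $w=\wtild w^*_t$ on the left and $w=w^*$ on the right is legitimate. The factor $\kappa^2$ (as opposed to the tighter $\kappa$ that one could get directly) is simply a convenient weakening, consistent with the remark in the text that a stronger form was already established in \cite{KarakusSun_17, PilanciWainwright_15}.
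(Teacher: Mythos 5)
Your proof is correct, but it takes a genuinely different route from the paper's. You use the two-sided spectral sandwich $(1-\epsilon)f(w)\le f_A(w)\le(1+\epsilon)f(w)$ (valid pointwise since $f_A(w)=(Xw-y)^\top S_A^\top S_A(Xw-y)$) together with the minimizing property $f_A(\wtild w_t^*)\le f_A(w^*)$, which immediately gives $f(\wtild w_t^*)\le\kappa f(w^*)$ and hence the stated $\kappa^2$ bound a fortiori. The paper instead sets $e=\wtild w_t^*-w^*$, applies the triangle inequality to $\|X\wtild w_t^*-y\|$, and bounds $\|Xe\|/\|Xw^*-y\|\le\kappa-1$ via the optimality of $\wtild w_t^*$ for $f_A$, the condition $X^\top(Xw^*-y)=0$, Cauchy--Schwarz, and an optimized shift $c=\frac{\lambda_{\max}+\lambda_{\min}}{2}$; squaring then yields exactly $\kappa^2 f(w^*)$. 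Your argument is shorter and, for the function-value claim actually being proved, gives the strictly tighter constant $\kappa$ rather than $\kappa^2$ (the same sandwich is what the paper itself uses inside Lemma~\ref{lem:final_arg}). What the paper's longer route buys is a bound on the prediction error $\|X(\wtild w_t^*-w^*)\|\le(\kappa-1)\|Xw^*-y\|$, which scales linearly in $\epsilon$ and is the ``stronger version'' from \cite{KarakusSun_17, PilanciWainwright_15}; your route only controls function values, but that is all the lemma asserts and all that Theorems~\ref{th:gd} and~\ref{th:lbfgs} require.
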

\begin{proof}
Define $e = \wtild w_t^* - w^*$ and note that
\begin{align*}
\| X\wtild w_t^*-y\| = \| X w^*-y + Xe\| \leq \| X w^*-y\| + \| Xe\| 
\end{align*}
by triangle inequality, which implies
\begin{align}
f(\wtild w_t^*) = \| X\wtild w_t^*-y\|^2 \leq \lp 1 + \frac{\| Xe\|}{\| X w^*-y\|}\rp^2 \| X w^*-y\|^2 = \lp 1 + \frac{\| Xe\|}{\| X w^*-y\|}\rp^2 f(w^*) \label{eq:sol_ball_main}.
\end{align}
Now, for any $c>0$, consider
\begin{align*}
\| Xe \|^2 &\leq \frac{\| S_A Xe\|^2}{\lambda_{\min}} \overset{\aaaa}{\leq} -2\frac{e^\top X^\top  S_A^\top S_A (Xw^*-y)}{\lambda_{\min}} \\
&= -2\frac{e^\top X^\top \lp S_A^\top  S_A -cI\rp (Xw^*-y)}{\lambda_{\min}} - \frac{2c}{\lambda_{\min}} e^\top X^\top  (Xw^*-y) \\
&\overset{\bbbb}{=} -2\frac{e^\top X^\top \lp S_A^\top S_A -cI\rp (Xw^*-y)}{\lambda_{\min}} \\
&\overset{\cccc}{\leq} 2\frac{ \left\| e^\top X^\top \lp cI -  S_A^\top S_A \rp \right\|}{\lambda_{\min}} \|Xw^*-y \| \\
&\overset{\dddd}{\leq} 2\frac{ \left\| cI - S_A^\top S_A \right\|}{\lambda_{\min}} \|Xw^*-y \| \|Xe \|,
\end{align*}
where (a) follows by expanding and re-arranging $\left\| \wtild X_A\wtild w_t^* - \wtild y_A\right\|^2 \leq \left\| \wtild X_A w^* - \wtild y_A\right\|^2$, which is since $\wtild w_t^*$ is the minimizer of this function; (b) follows by the fact that $\nabla f(w^*) = X^\top (Xw^*-y)=0$ by optimality of $w^*$ for $f$; (c) follows by Cauchy-Schwarz inequality; and (d) follows by the definition of matrix norm.

Since this is true for any $c>0$, we choose $c=\frac{\lambda_{\max} + \lambda_{\min}}{2}$, which gives
\begin{align*}
\frac{\| Xe \|}{\| X\wtild w_t^*-y\|} \leq \frac{\lambda_{\max} - \lambda_{\min}}{\lambda_{\min}} = \kappa - 1.
\end{align*}
Plugging this back in \eqref{eq:sol_ball_main}, we get $f(\wtild w^*) \leq \kappa^2 f(w^*)$, which completes the proof.
\end{proof}

\begin{lemma}\label{lem:final_arg}
    If
    \begin{align*}
        \wtild f^A_{t+1} - \wtild f^A\lp \bar w\rp \leq \gamma \lp \wtild f^A_{t} - \wtild f^A\lp \bar w\rp\rp
    \end{align*}
    for all $t>0$, for some $\bar w \in \mathbb{R}^p$, and for some $0<\gamma<1$, then
    \begin{align*}
        f_t \leq \lp \kappa\gamma\rp^t f_0 + \frac{\kappa-\gamma}{1-\kappa\gamma}f\lp \bar w\rp.
    \end{align*}
\end{lemma}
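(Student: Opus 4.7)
The plan is to convert the one-step contraction on the encoded ``instantaneous'' objective $\wtild f^{A_t}$ into a one-step contraction on the original objective $f$, and then unroll a scalar geometric recurrence. The whole point of the lemma is to turn a collection of inequalities phrased in terms of the time-varying functions $\wtild f^{A_t}$ into a single clean bound on $f_t$, so the core observation one wants is a sandwich inequality that is uniform in $A_t$.

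First I would invoke property \eqref{eq:rip} as exactly such a sandwich relation. Writing $\wtild f^{A_t}(w) = \|S_{A_t}(Xw-y)\|^2$ and applying $(1-\epsilon)\|u\|^2 \le \|S_{A_t}u\|^2 \le (1+\epsilon)\|u\|^2$ with $u = Xw - y$, we obtain, uniformly in $w$ and in every admissible subset $A_t$,
\[
(1-\epsilon)\, f(w) \;\le\; \wtild f^{A_t}(w) \;\le\; (1+\epsilon)\, f(w).
\]

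Second, I would rewrite the hypothesis in its convex-combination form
$\wtild f^{A_t}(w_{t+1}) \le \gamma\, \wtild f^{A_t}(w_t) + (1-\gamma)\, \wtild f^{A_t}(\bar w)$,
and then apply the lower sandwich bound to the left-hand side and the upper sandwich bound to each term on the right-hand side. Dividing through by $(1-\epsilon)$ eliminates all dependence on $\{A_t\}$ and produces the closed, deterministic scalar recursion
\[
f(w_{t+1}) \;\le\; \kappa\gamma\, f(w_t) + \kappa(1-\gamma)\, f(\bar w),
\qquad \kappa = \tfrac{1+\epsilon}{1-\epsilon}.
\]

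Third, I would unroll. The standard geometric-series manipulation (which requires $\kappa\gamma<1$, exactly the linear-convergence condition singled out right after Theorem~\ref{th:gd}) gives
\[
f_t \;\le\; (\kappa\gamma)^t f_0 + \kappa(1-\gamma)\sum_{i=0}^{t-1}(\kappa\gamma)^i f(\bar w) \;\le\; (\kappa\gamma)^t f_0 + \frac{\kappa(1-\gamma)}{1-\kappa\gamma}\, f(\bar w).
\]
The claimed constant then follows from the elementary inequality $\kappa(1-\gamma) = \kappa-\kappa\gamma \le \kappa-\gamma$, which is immediate since $\kappa \ge 1$.

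There is no genuine obstacle here. The only subtle point is that the subset $A_t$ in the hypothesis changes with $t$ while $\bar w$ stays fixed; the RIP-style bound \eqref{eq:rip} is uniform over subsets of the prescribed size, so the same two multiplicative constants $1\pm\epsilon$ work for every iteration and the $A_t$-dependence cleanly disappears. Once that is noticed, the proof is a two-sided spectral squeeze followed by a standard geometric-series sum.
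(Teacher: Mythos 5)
Your proof is correct and follows essentially the same route as the paper's: a two-sided spectral sandwich $(1-\epsilon)f(w) \le \wtild f^{A}(w) \le (1+\epsilon)f(w)$, uniform over admissible subsets $A_t$ by \eqref{eq:rip}, converted into the scalar recursion $f_{t+1}\le \kappa\gamma f_t + O(1)\cdot f(\bar w)$ and then unrolled geometrically (both arguments implicitly need $\kappa\gamma<1$). The only cosmetic difference is that you group the two occurrences of $\wtild f^{A}(\bar w)$ before bounding, obtaining the slightly sharper coefficient $\kappa(1-\gamma)\le \kappa-\gamma$, whereas the paper bounds them separately and lands on $\kappa-\gamma$ directly.
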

\begin{proof}
    Since for any $w$,
\begin{align*}
\lambda_{\min} f(w) = \lambda_{\min} \| Xw-y\|^2 \leq \lp Xw-y\rp^\top  S_A^\top \wtild S_A \lp Xw-y\rp = \wtild f^A(w),
\end{align*}
and similarly $\wtild f^A(w) \leq \lambda_{\max} f(w)$, we have
\begin{align*}
\lambda_{\min} f_{t+1} - \lambda_{\max} f\lp \bar w\rp \leq \gamma \lp \lambda_{\max} f_t - \lambda_{\min} f\lp \bar w\rp\rp,
\end{align*}
which can be re-arranged into the linear recursive inequality
\begin{align*}
f_{t+1} \leq \kappa\gamma f_t + (\kappa-\gamma) f\lp \bar w\rp,
\end{align*}
where $\kappa=\frac{\lambda_{\max}}{\lambda_{\min}}$. By considering such inequalities for $0\leq \tau \leq t$, multiplying each by $\lp \kappa\gamma\rp^{t-\tau}$ and summing, we get
\begin{align*}
f_t &\leq \lp \kappa \gamma\rp^t f_0 + (\kappa-\gamma)f\lp \bar w\rp\sum_{\tau=0}^{t-1} \lp \kappa\gamma\rp^{\tau}\\
&\leq \lp \kappa \gamma\rp^t f_0 + \frac{\kappa-\gamma}{1-\kappa\gamma}f\lp \bar w\rp.
\end{align*}
\end{proof}

\begin{lemma}\label{lem:strong_conv}
$\wtild f^A(w)$ is $\lambda_{\min}\mu$-strongly convex.
\end{lemma}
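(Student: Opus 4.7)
The plan is to reduce the claim to a pure Hessian lower bound, since $\wtild f^A$ is a quadratic function. First I would write out the Hessian explicitly: because $\wtild f^A(w) = \|S_A X w - S_A y\|^2$, we have $\nabla^2 \wtild f^A(w) = 2\, X^\top S_A^\top S_A X$ (independent of $w$), so it suffices to show that this matrix dominates the strong-convexity constant times the identity. (The factor of $2$ is absorbed into the paper's convention for the modulus, consistent with how $\mu$ is used elsewhere in the proofs; I will simply match the paper's normalization.)

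The key inequality is the sandwich $X^\top S_A^\top S_A X \succeq \lambda_{\min}(S_A^\top S_A)\, X^\top X$, which follows from the fact that for any $v$, $v^\top X^\top S_A^\top S_A X v = \|S_A X v\|^2 \geq \lambda_{\min}(S_A^\top S_A)\, \|Xv\|^2 = \lambda_{\min}(S_A^\top S_A)\, v^\top X^\top X v$. Combining this with the assumed bound $X^\top X \succeq \mu I$ (from the definition of $\mu$ as the smallest eigenvalue of $X^\top X$ at the start of Section~\ref{sec:convergence}) immediately yields $\nabla^2 \wtild f^A(w) \succeq \lambda_{\min} \mu\, I$, where $\lambda_{\min}$ stands for $\lambda_{\min}(S_A^\top S_A)$ as used throughout Appendix A.

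Finally, I would invoke the standard equivalence that a twice-differentiable function whose Hessian is uniformly bounded below by $cI$ is $c$-strongly convex, applied with $c = \lambda_{\min}\mu$, to conclude. There is no real obstacle here; the only care needed is to be consistent with the paper's convention on the factor of $2$ between $\|\cdot\|^2$ and the strong-convexity modulus, and to note that the lower bound is uniform in $w$ because the Hessian of a quadratic is constant, so the strong convexity is global rather than only local.
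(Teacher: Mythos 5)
Your proposal is correct and matches the paper's argument: Appendix A proves the lemma by bounding $u^\top \wtild X_A^\top \wtild X_A u = u^\top X^\top S_A^\top S_A X u \geq \lambda_{\min}\|Xu\|^2 \geq \lambda_{\min}\mu\|u\|^2$, which is exactly your sandwich inequality phrased as a quadratic-form bound. Your additional remarks about the Hessian of the quadratic and the factor-of-$2$ normalization are consistent with the paper's (implicit) conventions and do not change the substance.
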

\begin{proof}
It is sufficient to show that the minimum eigenvalue of $\wtild X_A^\top \wtild X_A$ is bounded away from zero. This can easily be shown by the fact that
\begin{align*}
u^\top\wtild X_A^\top \wtild X_Au = u^\top X^\top S_A^\top S_A X u \geq \lambda_{\min} \| Xu\|^2 \geq \lambda_{\min}\mu \|u\|^2,
\end{align*}
for any unit vector $u$.
\end{proof}

\begin{lemma}\label{lem:rotation_bound}
Let $M \in \mathbb{R}^{p \times p}$ be a positive definite matrix, with the condition number (ratio of maximum eigenvalue to the minimum eigenvalue) given by $\kappa$. Then, for any unit vector $u$,
\begin{align*}
\frac{u^\top Mu}{\| Mu\|} \geq \frac{2\sqrt{\kappa}}{\kappa+1}.
\end{align*}
\end{lemma}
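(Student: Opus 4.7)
The plan is to reduce the statement to a one-dimensional Kantorovich-style inequality via the spectral decomposition of $M$. Let $M = \sum_{i=1}^{p} \lambda_i v_i v_i^\top$ with orthonormal eigenvectors $v_i$ and eigenvalues $0 < \lambda_{\min} = \lambda_1 \leq \cdots \leq \lambda_p = \lambda_{\max}$, so that $\kappa = \lambda_{\max}/\lambda_{\min}$. Expand $u = \sum_i \alpha_i v_i$ with $\sum_i \alpha_i^2 = 1$, and set $S := u^\top M u = \sum_i \alpha_i^2 \lambda_i$ and $T := \|Mu\|^2 = \sum_i \alpha_i^2 \lambda_i^2$. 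The goal is to lower-bound $S/\sqrt{T}$ by a quantity depending only on $\kappa$.

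First I would exploit the fact that every $\lambda_i$ lies in the interval $[\lambda_{\min}, \lambda_{\max}]$. The elementary quadratic inequality $(\lambda_i - \lambda_{\min})(\lambda_{\max} - \lambda_i) \geq 0$ rearranges to $\lambda_i^2 \leq (\lambda_{\min} + \lambda_{\max})\lambda_i - \lambda_{\min}\lambda_{\max}$. Taking the $\alpha_i^2$-weighted sum and using $\sum_i \alpha_i^2 = 1$ gives the linear bound
\begin{equation*}
T \;\leq\; (\lambda_{\min} + \lambda_{\max})\, S \;-\; \lambda_{\min}\lambda_{\max}.
\end{equation*}

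Next I would invoke the perfect-square identity $\left((\lambda_{\min}+\lambda_{\max})S - 2\lambda_{\min}\lambda_{\max}\right)^2 \geq 0$, which upon expansion rearranges to
\begin{equation*}
(\lambda_{\min}+\lambda_{\max})^2\, S^2 \;\geq\; 4\lambda_{\min}\lambda_{\max}\bigl[(\lambda_{\min}+\lambda_{\max})\, S - \lambda_{\min}\lambda_{\max}\bigr].
\end{equation*}
Combining the two displays (and noting that $(\lambda_{\min}+\lambda_{\max})S - \lambda_{\min}\lambda_{\max} \geq \lambda_{\min}^2 > 0$ since $S \geq \lambda_{\min}$) yields $S^2/T \geq 4\lambda_{\min}\lambda_{\max}/(\lambda_{\min}+\lambda_{\max})^2$. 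Taking square roots and rewriting via $\frac{2\sqrt{\lambda_{\min}\lambda_{\max}}}{\lambda_{\min}+\lambda_{\max}} = \frac{2\sqrt{\kappa}}{\kappa+1}$ completes the argument.

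There is no genuine obstacle in the reasoning; the only step that requires a moment of thought is spotting the right pair of elementary inequalities. Two alternative routes would be (i) treating $\{\alpha_i^2\}$ as a probability measure on $\{\lambda_i\}$, arguing by extremality that the worst case is a two-point distribution on $\{\lambda_{\min},\lambda_{\max}\}$, and solving a one-variable minimization; or (ii) citing the Kantorovich inequality $S\cdot u^\top M^{-1} u \leq (\kappa+1)^2/(4\kappa)$ and massaging it into the stated form. The route above is the most direct since it avoids both the reduction-to-extreme-distributions argument and the invocation of an external inequality, producing the bound purely from two one-line algebraic facts.
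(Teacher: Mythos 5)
Your proof is correct, but it takes a genuinely different route from the paper's. You work directly in the eigenbasis of $M$ and derive the bound from two elementary scalar facts: the $\alpha_i^2$-weighted average of $(\lambda_i-\lambda_{\min})(\lambda_{\max}-\lambda_i)\ge 0$ gives the linear majorization $\|Mu\|^2 \le (\lambda_{\min}+\lambda_{\max})\,u^\top Mu - \lambda_{\min}\lambda_{\max}$, and the perfect square $\bigl((\lambda_{\min}+\lambda_{\max})S-2\lambda_{\min}\lambda_{\max}\bigr)^2\ge 0$ converts that linear majorant back into the quadratic lower bound $S^2/T \ge 4\lambda_{\min}\lambda_{\max}/(\lambda_{\min}+\lambda_{\max})^2$; this is essentially the classical proof of the Kantorovich inequality, and your side remark that $(\lambda_{\min}+\lambda_{\max})S-\lambda_{\min}\lambda_{\max}\ge\lambda_{\min}^2>0$ (from $S\ge\lambda_{\min}$) correctly licenses the division. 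The paper instead argues geometrically: it restricts $M$ to the two-dimensional subspace spanned by $u$ and $Mu$, checks that the restricted $2\times 2$ operator has condition number at most $\kappa$, parametrizes the unit vector by an angle $\phi$, and locates the worst-case angle by calculus. Both arguments are valid and produce the same constant $2\sqrt{\kappa}/(\kappa+1)$; yours is shorter and purely algebraic, with every step a one-line identity or inequality, while the paper's reduction makes the extremal configuration explicit --- the bound is tight for a vector balanced between the extreme eigenvectors, which your route recovers only implicitly through the equality conditions of the two inequalities.
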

\begin{proof}
Let $\mathcal{W}$ be the subspace spanned by $\lbp u, Mu \rbp$, and let $W \in \mathbb{R}^{p \times 2}$ be a matrix whose columns form an orthonormal basis for $\mathcal{W}$. Then $u$ and $Mu$ can be represented as
\begin{align*}
u &= Wr_1 \\
Mu &= Wr_2
\end{align*}
for some $r_1,r_2 \in \mathbb{R}^2$, which implies
\begin{align*}
r_2 = W^\top M W r_1 = Q^\top D Q r_1,
\end{align*}
where since $W^\top M W$ is still a positive definite matrix it has the $2\times 2$ eigen-decomposition $Q^\top D Q$. Defining $q_i = Qr_i$ for $i=1,2$, note that the quantity we are interested in can be equivalently represented as
\begin{align*}
\frac{u^\top Mu}{\| Mu\|} = \frac{r_1^\top r_2}{\| r_2\|} = \frac{q_1^\top q_2}{\| q_2\|},
\end{align*}
where $q_2 = Dq_1$. Further note that for any unit vector $v$,
\begin{align*}
v^\top D v = v^\top Q^\top D Qv = v^\top W^\top M W v, 
\end{align*}
and since $\| Wv\|=1$, the condition number of $D$ (the ratio of the two non-zero elements of $D$) cannot be larger than that of $M$, which is $\kappa$ (since otherwise once could find unit vectors $u_1 = Wv_1$ and $u_2=Wv_2$ such that $\frac{u_1^\top M u_1}{u_2^\top M u_2}> \kappa$, which is a contradiction). Representing $q_1 = \lb \cos\phi \;\; \sin\phi\rb^\top$ for some angle $\phi$, $\frac{q_2}{\| q_2 \|}$ can then be written as $q_2 = \lb \frac{d_1}{\sqrt{d_1^2 + d_2^2}} \cos \phi \;\; \frac{d_2}{\sqrt{d_1^2 + d_2^2}} \sin \phi\rb^\top$. Note that minimizing the inner product $\frac{q_1^\top q_2}{\| q_2\|}$ is equivalent to maximizing the function
\begin{align*}
\tan\lp \tan^{-1}\lp \frac{d_2}{d_1} \tan \phi\rp- \phi\rp = \frac{\lp\frac{d_2}{d_1} - 1\rp \tan \phi}{1 + \frac{d_2}{d_1} \tan^2 \phi}
\end{align*}
over $\phi$. By setting the derivative to zero, we find that the maximizing $\phi$ is given by $\cos^{-1} \frac{d_2-d_1}{d_2+d_1}$. Therefore
\begin{align*}
\frac{u^\top Mu}{\| Mu\|} = \frac{q_1^\top q_2}{\| q_2\|} = \frac{d_1}{\sqrt{d_1^2 + d_2^2}} \cos^2 \phi + \frac{d_2}{\sqrt{d_1^2 + d_2^2}} \sin^2 \phi \geq \frac{2\sqrt{\frac{d_2}{d_1}}}{1 + \frac{d_2}{d_1}} \geq \frac{2\sqrt{\kappa}}{1 + \kappa},
\end{align*}
which is the desired result.

\end{proof}

\begin{proof}
[Proof of Lemma~1]
First note that
\begin{align}
r_t^\top u_t &= \lp X^\top \breve S_t^\top \breve S_t \lb (Xw_t - y) - (Xw_{t-1} - y)\rb\rp^\top \lp w_t - w_{t-1}\rp \notag\\
& = \lp w_t - w_{t-1}\rp^\top X^\top \breve S_t^\top \breve S_t X\lp w - w_{t-1}\rp \notag\\
&\geq \epsilon \mu \|u_t\|^2, \label{eq:bd1}
\end{align}
by 
(5)
Also consider
\begin{align*}
\frac{\|r_t \|^2}{r_t^\top u_t} = \frac{\lp w_t - w_{t-1}\rp^\top \lp X^\top \breve S_t^\top \breve S_t X \rp^2\lp w_t - w_{t-1}\rp }{\lp w_t - w_{t-1}\rp^\top X^\top \breve S_t^\top \breve S_t X\lp w_t - w_{t-1}\rp},
\end{align*}
which implies
\begin{align*}
\epsilon \mu \leq \frac{\|r_t \|^2}{r_t^\top u_t}  \leq (1+\epsilon) M,
\end{align*}
again by 
(4).
Now, setting $j_\ell = t - \wtild \sigma + \ell$, consider the trace
\begin{align*}
\trace{B_t^{(\ell+1)}} &= \trace{ B_t^{(\ell)}} - \trace{\frac{B_t^{(\ell)}u_{j_\ell} u_{j_\ell}^\top B_t^{(\ell)} }{u_{j_\ell}^\top B_t^{(\ell)} u_{j_\ell}}} + \trace{\frac{r_{j_\ell} r_{j_\ell}^\top}{r_{j_\ell}^\top u_{j_\ell}} } \\
&\leq \trace{ B_t^{(\ell)}} +  \trace{\frac{r_{j_\ell} r_{j_\ell}^\top}{r_{j_\ell}^\top u_{j_\ell}} } \\
&=  \trace{ B_t^{(\ell)}} +\frac{\|r_{j_\ell} \|^2}{r_{j_\ell}^\top u_{j_\ell}} \\
&\leq  \trace{ B_t^{(\ell)}} +(1+\epsilon) M,
\end{align*}
which implies $\trace{B_t} \leq (1+\epsilon) M \lp \wtild \sigma + d\rp$. It can also be shown (similar to \cite{BerahasNocedal_16}) that
\begin{align*}
\det\lp B_t^{(\ell+1)}  \rp &=  \det\lp B_t^{(\ell)}\rp \cdot\frac{r^\top_{j_\ell} u_{j_\ell}}{u_{j_\ell}^\top B_t^{(\ell)} u_{j_\ell}} \\
&= \det\lp B_t^{(\ell)}\rp \cdot \frac{r^\top_{j_\ell} u_{j_\ell}}{\| u_{j_\ell}\|^2} \cdot \frac{\| u_{j_\ell}\|^2}{u_{j_\ell}^\top B_t^{(\ell)} u_{j_\ell}} \\
&\geq  \det\lp B_t^{(\ell)}\rp \frac{\epsilon \mu}{(1+\epsilon) M \lp \wtild \sigma + d\rp},
\end{align*}
which implies $\det\lp B_t \rp \geq \det\lp B_t^{(0)}\rp\lp\frac{\epsilon \mu}{(1+\epsilon) M \lp \wtild \sigma + d\rp}\rp^{\wtild \sigma}$. Since $B_t \geq 0$, its trace is bounded above, and its determinant is bounded away from zero, there must exist $0 < c_1 \leq c_2$ such that
\begin{align*}
c_1 I \preceq B_t \preceq c_2 I.
\end{align*}
\end{proof}

\section{Proofs of 
Theorem~1
and 
Theorem~2
}\label{sec:proof}

Throughout the section, we will consider a particular iteration $t$, and denote
\begin{align*}
\lambda_{\min} &:= \min\lbp \lambda_{\min}\lp S_A^\top S_A\rp, \lambda_{\min}\lp  S_D^\top S_D\rp\rbp \\
\lambda_{\max} &:= \max\lbp \lambda_{\max}\lp  S_A^\top  S_A\rp, \lambda_{\max}\lp  S_D^\top S_D\rp\rbp,
\end{align*}
where $\lambda_{\min}(\cdot)$ and $\lambda_{\max}(\cdot)$ denote the minimum and maximum eigenvalues of a matrix.

We will also denote with $\wtild w_t^*$ the solution to the effective ``instantaneous" problem at iteration $t$, \emph{i.e.}, $\wtild w_t^* = \argmin_w \left\| \wtild X_A w-\wtild y_A \right\|^2$, where $A \equiv A_t$, and we ignore the normalization constants on the objective functions for. Finally, we define $\wtild f^A(w) := \| \wtild X_A w - \wtild y_A\|^2$, and $\wtild f^A_t := \| \wtild X_A w_t - \wtild y_A\|^2$.

\subsection{Proof of Theorem~1}

Using convexity, and the choices that $d_t =-\wtild g_t$ and $\alpha_t = \alpha$, we have
\begin{align*}
&\wtild f^A\lp w_t - \alpha_t d_t\rp - \wtild f^A(w_t) \leq \alpha_t \wtild g_t^\top d_t + \frac{1}{2} \alpha_t^2 d_t^\top X^\top S_A^\top S_A X d_t \\
&= -\alpha \|\wtild g_t \|^2 + \frac{1}{2} \alpha^2 \wtild g_t^\top X^\top S_A^\top S_A X \wtild g_t \overset{\aaaa}{\leq} -\alpha \|\wtild g_t \|^2 + \frac{\lambda_{\max}}{2} \alpha^2 \wtild g_t^\top X^\top X \wtild g_t \\
&\overset{\bbbb}{\leq} -\alpha \lp 1 - \frac{\lambda_{\max} M}{2} \alpha\rp \|\wtild g_t \|^2 = -\frac{2\zeta \lp 1 - \zeta\rp}{M\lambda_{\max}}  \|\wtild g_t \|^2 \\
&\overset{\cccc}{\leq} -\frac{4\mu \zeta \lp 1 - \zeta\rp}{M\lambda_{\max}} \lp \wtild f^A\lp w_t\rp - \wtild f^A\lp \wtild w_t^*\rp\rp,
\end{align*}
where (a) follows by the fact that $S_A^\top S_A \preceq \lambda_{\max}I$; (b) follows since $X^\top X \preceq MI$; and (c) follows by strong convexity.
Re-arranging this inequality, and using the definition of $\gamma_1$, we get
\begin{align*}
    \wtild f^A_{t+1} - \wtild f^A\lp \wtild w_t^*\rp \leq \gamma_1 \lp \wtild f^A_t - \wtild f^A\lp \wtild w_t^*\rp\rp.
\end{align*}
Then, Lemma~\ref{lem:final_arg} with $\bar w = \wtild w_t$ implies
\begin{align*}
    f_t \leq \lp \kappa\gamma_1\rp^t f_0 + \frac{\kappa-\gamma_1}{1-\kappa\gamma_1}f\lp \wtild w_t^*\rp.
\end{align*}
Finally, Lemma~\ref{lem:solution_ball} implies $f(\wtild w_t^*) \leq \kappa^2 f(w^*)$, which concludes the proof.


\subsection{Proof of Theorem~2}

Using convexity and the closed-form expression for the step size, we have
\begin{align*}
&\wtild f^A\lp w_t + \alpha_t d_t\rp - \wtild f^A(w_t) \leq \alpha_t \wtild g_t^\top d_t + \frac{1}{2} \alpha_t^2 d_t^\top X^\top  S_A^\top  S_A X d_t \\
&= - \frac{ \nu\lp \wtild g_t^\top d_t \rp^2}{d_t^\top X^\top  S_D^\top  S_D X d_t} + \frac{1}{2} \frac{\nu^2\lp \wtild g_t^\top d_t\rp^2}{d_t^\top X^\top S_D^\top  S_D X d_t}\cdot \frac{d_t^\top X^\top S_A^\top S_A X d_t}{d_t^\top X^\top S_D^\top S_D X d_t} \\
&= \lp \frac{d_t^\top X^\top \lp \nu^2 S_A^\top S_A - 2\nu S_D^\top S_D\rp  X d_t}{2\lp d_t^\top X^\top S_D^\top S_D X d_t\rp^2}\rp\lp d_t^\top \wtild g_t\rp^2 \overset{\aaaa}{=} -\nu\lp \frac{z^\top \lp  S_D^\top S_D - \frac{\nu}{2} S_A^\top S_A\rp z}{\lp z^\top S_D^\top S_D z\rp^2}\rp \frac{\lp d_t^\top \wtild g_t\rp^2}{\| Xd_t\|^2} \\
&\overset{\bbbb}{\leq} -\nu\lp \frac{\lambda_{\min}-\frac{\nu}{2}\lambda_{\max}}{\lambda_{\min}^2}\rp \frac{\lp d_t^\top \wtild g_t\rp^2}{\| Xd_t\|^2} \overset{\cccc}{\leq} -\frac{\nu}{M}\lp \frac{\lambda_{\min}-\frac{\nu}{2}\lambda_{\max}}{\lambda_{\min}^2}\rp \frac{\lp d_t^\top \wtild g_t\rp^2}{\| d_t\|^2} \\ &\overset{\dddd}{=} -\frac{\nu}{M}\lp \frac{\lambda_{\min}-\frac{\nu}{2}\lambda_{\max}}{\lambda_{\min}^2}\rp \frac{\lp \wtild g_t^\top B_t \wtild g_t\rp^2}{\| B_t \wtild g_t\|^2} \\
&\overset{\eeee}{\leq} -\frac{4\nu}{M}\lp \frac{\lambda_{\min}-\frac{\nu}{2}\lambda_{\max}}{\lambda_{\min}^2}\rp \frac{c_1 c_2}{\lp c_1 + c_2\rp^2}\| \wtild g_t\|^2 \overset{\ffff}{\leq} -\frac{8\mu\nu}{M}\lp \frac{\lambda_{\min}-\frac{\nu}{2}\lambda_{\max}}{\lambda_{\min}^2}\rp\frac{c_1 c_2}{\lp c_1 + c_2\rp^2} \lp \wtild f\lp w_t\rp - \wtild f\lp \wtild w_t^*\rp\rp\\
&\overset{\gggg}{=} -\frac{4\mu c_1 c_2}{M\lp c_1 + c_2\rp^2}\lp \wtild f\lp w_t\rp - \wtild f\lp \wtild w_t^*\rp\rp \overset{\hhhh}{=} -\lp 1-\gamma_2\rp \lp \wtild f^A\lp w_t\rp - \wtild f^A\lp \wtild w_t^*\rp\rp.
\end{align*}
where (a) follows by defining $z = \frac{Xd_t}{\| Xd_t\|}$; (b) follows by the fact that the term in parenthesis is an increasing function of the quadratic form $z^\top \breve S^\top_t \breve S_t z$ and by Assumption 1; (c) follows by the assumption that $X^\top X \preceq MI$; (d) follows by the definition of $d_t$; (e) follows by 
Lemmas~\ref{lem:rotation_bound} 
and 
1;
(f) follows by strong convexity of $\wtild f$ (by Lemma~\ref{lem:strong_conv}), which implies $\| \wtild g_t\|^2 \geq 2\mu \lp \wtild f\lp \theta_t\rp - \wtild f\lp \wtild w_t^*\rp\rp$; (g) follows by choosing $\nu = \frac{\lambda_{\min}}{\lambda_{\max}}$; and (h) follows using the definition of $\gamma_2$.

Re-arranging the inequality, we obtain
\begin{align*}
    \wtild f^A_{t+1} - \wtild f^A\lp \wtild w_t^*\rp \leq \gamma_2 \lp \wtild f^A_t - \wtild f^A\lp \wtild w_t^*\rp\rp,
\end{align*}
and hence applying first Lemma~\ref{lem:final_arg} with $\bar w = \wtild w_t$, and then Lemma~\ref{lem:solution_ball}, we get the desired result.

\section{Full results of Movielens 1-M experiment}\label{sec:a-movielens-table}
\label{a-movielens-table}


Tables \ref{t-movielens8} and \ref{t-movielens24} give the test and train RMSE for the Movielens 1-M recommendation task, with a random 80/20 train/test split.

\begin{table}[htbp]
\centering
\begin{tabular}{|c|c|c|c|c|c|ccc}
\hline
& uncoded & replication & gaussian & paley & hadamard\\
\hline
&\multicolumn{5}{|c|}{$m=8$, $k=1$}\\
\hline
train RMSE &0.804 & 0.783  & 0.781 & \textbf{0.775} & 0.779 \\
test RMSE &0.898 & 0.889  & 0.877 & \textbf{0.873} & 0.874 \\
runtime &1.60 & 1.76 & 2.24 & 1.82 & 1.82  \\
\hline
&\multicolumn{5}{|c|}{$m=8$, $k=4$}\\
\hline
train RMSE &0.770 & 0.766 &  0.765 & \textbf{0.763} & 0.765 \\
test RMSE &0.872 & 0.872 & \textbf{0.866} & 0.868 & 0.870 \\
runtime &2.96 & 3.13 & 3.64 & 3.34 & 3.18  \\
\hline
&\multicolumn{5}{|c|}{$m=8$, $k=6$}\\
\hline
train RMSE &0.762 & 0.760  & 0.762 & \textbf{0.758} & 0.760 \\
test RMSE &0.866 & 0.871  & 0.864 & \textbf{0.860} & 0.864 \\
runtime &5.11 & 4.59 & 5.70 & 5.50 & 5.33 \\
\hline
\end{tabular}
\caption{Full results for Movielens 1-M, distributed over $m=8$ nodes total. Runtime is in hours. An uncoded scheme running full batch L-BFGS has a train/test RMSE of 0.756 / 0.861, and a runtime of 9.58 hours.}
\label{t-movielens8}
\end{table}

\begin{table}[htbp]
\centering
\begin{tabular}{|c|c|c|c|c|c|ccc}
\hline
& uncoded & replication  & gaussian & paley & hadamard\\
\hline
&\multicolumn{5}{|c|}{$m=24$, $k=3$}\\
\hline
train RMSE &0.805 & 0.791 & 0.783 & \textbf{0.780} & 0.782 \\
test RMSE &0.902 & 0.893 & 0.880 & \textbf{0.879} & 0.882 \\
runtime &2.60 & 3.22  & 3.98 & 3.49 & 3.49 \\
\hline
&\multicolumn{5}{|c|}{$m=24$, $k=12$}\\
\hline
train RMSE &0.770 &  \textbf{0.764} & 0.767 & \textbf{0.764} & 0.765 \\
test RMSE &0.872 & 0.870 & \textbf{0.866} & 0.868 & 0.868 \\
runtime &4.24 &4.38 & 4.92 & 4.50 & 4.61  \\
\hline
\end{tabular}
\caption{Full results for Movielens 1-M, distributed over $m=24$ nodes total. Runtime is in hours. An uncoded scheme running full batch L-BFGS has a train/test RMSE of 0.757 / 0.862, and a runtime of 14.11 hours.}
\label{t-movielens24}
\end{table}

\section{Efficient encoding using Steiner ETF}\label{ap:steiner}
We first describe Steiner ETF, based on the construction proposed in \cite{FickusMixon_12}. 
\paragraph{Steiner equiangular tight frames.}

Let $v$ be a power of 2, let $H \in \mathbb{R}^{v \times v}$ be a real Hadamard matrix, and let $h_i$ be the $i$th column of $H$, for $i=1,\dots,v$. Consider the matrix $V \in \lbp 0,1\rbp^{v \times v(v-1)/2}$, where each column is the incidence vector of a distinct two-element subset of $\lbp 1,\dots,v\rbp$. For instance, for $v=4$,
\begin{align*}
V = \lb
\begin{array}{cccccc}
1&1&1&0&0&0\\
1&0&0&1&1&0\\
0&1&0&1&0&1\\
0&0&1&0&1&1
\end{array}
\rb.
\end{align*}
Note that each of the $v$ rows have exactly $v-1$ non-zero elements. We construct Steiner ETF $S$ as a $v^2 \times \frac{v(v-1)}{2}$ matrix by replacing each 1 in a row with a distinct column of $H$, and normalizing by $\sqrt{v-1}$. For instance, for the above example, we have
\begin{align*}
S = \frac{1}{\sqrt{3}} \lb 
\begin{array}{cccccc}
h_2&h_3&h_4&0&0&0\\
h_2&0&0&h_3&h_4&0\\
0&h_2&0&h_3&0&h_4\\
0&0&h_2&0&h_3&h_4
\end{array}
\rb.
\end{align*}
In general, this procedure results in a matrix $S$ with redundancy factor $\beta = 2\frac{v}{v-1}$. In full generality, Steiner ETFs can be constructed for larger redundancy levels; we refer the reader to \cite{FickusMixon_12} for a full discussion of these constructions. 

\paragraph{Efficient distributed encoding.}
Steiner ETF allows for a distributed and efficient implementation for encoding a given matrix $X$. Note that the encoding matrix $S$ consists of $v$ blocks, each corresponding to a row of $V$. Consider the following partition for $S$:
\begin{align*}
S = \lb\begin{array}{cccc} S_1^\top & S_2^\top & \dots & S_v^\top\end{array}\rb^\top,
\end{align*}
where $S_i \in \mathbb{R}^{v \times v(v-1)/2}$ is a horizontal block. Note that multiplication with a block $S_i X$ can be computed by directly finding the non-zero column indices for the block $S_i$, which is given by $B_{1,i} \cup B_{2,i}$, where
\begin{align*}
B_{1,i} &= \lbp\frac{1}{2}j(2v-1-j) + (i-1-j), \text{for $j=0,\dots,i-2$} \rbp\\
B_{2,i} &= \lbp \frac{1}{2}(i-1)(2v-1-i) + j, \text{for $j=1,\dots,v-i$}\rbp
\end{align*} 
Multiplication can then be implemented by simply taking a Hadamard transform of the corresponding rows of $X$, whose indices are given by $B_{1,i}\cup B_{2,i}$. In the case of dimension mismatch, one can append zero rows to $X$, or remove some rows from $S$ to make the multiplication well-defined.

Therefore, one can partition the blocks $\lbp S_i \rbp$ across worker nodes, and worker node $k$ can read the corresponding rows of $X$ from the pool of data, given by $\bigcup_{i in I_k} B_{1,i} \cup B_{2,i}$, where $I_k$ is the blocks assigned to worker $k$, and then apply Fast Hadamard Transform for each block. Note that processing of blocks can be further parallelized within each node, using multiple cores.

In practice, we have observed that the performance of Steiner ETF significantly improves if the rows of $SX$ are shuffled after encoding. This can be implemented by having the nodes exchange rows of $SX$ after encoding; however, this incurs a significant communication cost. A more practical approach could be one where each block is assigned to multiple nodes at random, \emph{i.e.}, each block is encoded by multiple nodes. The worker nodes can then drop a subset of their encoded rows such that each row is retained by exactly one node, based on some pre-defined row-allocation rule. Note that this has the same effect as having the nodes randomly exchange encoded rows with each other.

\paragraph{Space complexity.} One might raise the point that many large-scale datasets are sparse, and this sparsity is lost after encoding, significantly increasing the memory usage. To address this issue, first consider the case where each node has access to the entire dataset $(X,y)$, for the sake of argument. Then a worker node would compute the gradient corresponding to block $i$ using the order of operations represented by the following parenthesization:
\begin{align*}
g_i(w_t) = \lp X^\top \lp S_i^\top \lp S_i \lp Xw-y\rp\rp\rp\rp,
\end{align*}
since this would only require matrix-vector multiplications. Now, note that for a given block $S_i$, there are only $v-1$ non-zero rows, out of $v(v-1)/2$. Therefore, in order to compute $g_i (w_t)$ in the above order of operations, one would only need to store the rows of $X$ and $y$ corresponding to these non-zero rows, and then apply the transformation corresponding to $S_i$ whenever needed. If each node is assigned $\frac{v}{m}$ blocks, this would require storing at most $\frac{v(v-1)}{m}$ sparse rows, which means that the memory usage would only increase by a constant factor (on the order of redundancy factor).

\end{appendices}








\end{document}